\definecolor{mygreen}{rgb}{0.09,0.5,0.14}
\definecolor{commentcolor}{RGB}{110,154,155}
\definecolor{inputcolor}{RGB}{255, 105, 180}   
\newcommand{\PyComment}[1]{\ttfamily\textcolor{commentcolor}{\# #1}}  
\newcommand{\PyInput}[1]{\ttfamily\textcolor{inputcolor}{\# #1}}
\newcommand{\PyCode}[1]{\ttfamily\textcolor{black}{#1}} 
\newcommand{\mysection}[1]{\vspace{-4pt}\section{#1}}
\newcommand{\mysubsection}[1]{\vspace{-4pt}\subsection{#1}}
\newcommand{\mysubsubsection}[1]{\vspace{-2pt}\subsubsection{#1}}
\newcommand{\myparagraph}[1]{\vspace{-2pt}\paragraph{#1}}
\DeclareMathOperator*{\argmax}{arg\,max}
\title{Metric Residual Networks for Sample Efficient \\
Goal-Conditioned Reinforcement Learning}
\author{
    Bo Liu\textsuperscript{\rm 1}, Yihao Feng\textsuperscript{\rm 1}, Qiang Liu\textsuperscript{\rm 1}, Peter Stone\textsuperscript{\rm 1,2}
}
\begin{document}

\maketitle

\begin{abstract}
Goal-conditioned reinforcement learning (GCRL) has a wide range of potential real-world applications, including manipulation and navigation problems in robotics. Especially in such robotics tasks, sample efficiency is of the utmost importance for GCRL since, by default, the agent is only rewarded when it reaches its goal. While several methods have been proposed to improve the sample efficiency of GCRL, one relatively under-studied approach is the design of neural architectures to support sample efficiency.
In this work, we introduce a novel neural architecture for GCRL that achieves significantly better sample efficiency than the commonly-used monolithic network architecture. 
The key insight is that the 
\emph{optimal} action-value function $Q^*(s, a, g)$ must satisfy the triangle inequality in a specific sense.
Furthermore, we introduce the \emph{metric residual network} (MRN)  that deliberately decomposes the action-value function $Q(s,a,g)$ into the negated summation of a metric plus a residual asymmetric component. MRN provably approximates any optimal action-value function $Q^*(s,a,g)$, thus making it a fitting neural architecture for GCRL.
We conduct comprehensive experiments across 12 standard benchmark environments in GCRL. The empirical results demonstrate that MRN uniformly outperforms other state-of-the-art GCRL neural architectures in terms of sample efficiency. The code is publicly available at \textcolor{magenta}{\url{https://github.com/Cranial-XIX/metric-residual-network}}.
\end{abstract}
\vspace{-5pt}

\mysection{Introduction}
Goal-conditioned reinforcement learning (GCRL) refers to the problem in which an agent learns to solve a set of tasks indicated by different ``goals" via trial and error. 
In contrast to the standard reinforcement learning (RL) setting, in which the per-step reward the agent receives can be an arbitrary fixed scalar function, the reward function in GCRL is usually an indicator function identifying whether the agent has achieved the goal but with a varying goal. 
As a result, GCRL enables the learning of a whole family of tasks with relatively little human effort toward reward design and thus has many potential real-world applications.
For instance, robot manipulation tasks like picking and placing an object in a target location can be viewed as a GCRL problem where the underlying task family is parameterized by the target goal location. 
Similarly, robot navigation can be viewed as a GCRL problem where the goal can be any navigation destination.

Although GCRL has a straightforward reward formulation, it inherits the challenges associated with sparse reward learning: unlike in the \emph{dense} reward setting, the reward signal is only informative when the agent reaches the goal. 
Therefore, sample efficiency is a major challenge in GCRL, meaning that the agent typically needs a large number of interactions with the environment to make meaningful learning progress. 
To address this problem, an active line of research focuses on designing novel learning algorithms that efficiently use the data. One popular approach is hindsight experience replay (HER)~\citep{andrychowicz2017hindsight}, which relabels the agent's trajectories as if they were aiming to reach the state that they in fact reached, thus rendering every trajectory an example of a successful goal achievement. 

One relatively under-explored direction is the design of better neural architectures for  GCRL. 
For actor-critic-like methods, prior work has proposed decomposing the critic function (a.k.a the action-value function $Q(s,a,g)$, see Sec.~\ref{sec:gcrl}) into a bilinear network, e.g., either $Q(s, a, g) = f(s,a)^\top \phi(g)$~\citep{schaul2015universal} or $Q(s, a, g) = f(s, a)^\top \phi(s, g)$~\citep{hong2022bilinear}, where $f$ and $g$ are separate neural modules. The principle behind these designs is to inject useful inductive bias into the architecture.  Although empirically found to be effective, it remains unclear why such a decomposition works and whether it can be improved upon.

In this work, we argue that one fundamental inductive bias is that under the sparse reward setting, the negated optimal action-value (e.g., $-Q^*(s, a, g)$) must satisfy the \emph{triangle inequality} in a specific sense.
While there exists prior work on neural architectures that respect the triangle inequality~\citep{pitis2020inductive}, we make the following novel contributions:
\begin{itemize}
    \item We are the first to show that the discounted $Q^*(s, a, g)$ in the standard GCRL setting (See Sec.~\ref{sec:gcrl}), when the goal is a deterministic onto mapping from the state, satisfies the triangle inequality.
    \item Motivated by the first point, we introduce the metric residual network (MRN) that deliberately decomposes the negative action-value function (e.g., $-Q$) into the sum of a metric and an asymmetric residual component, which provably approximates \emph{any} quasipseudometric.\footnote{$(d, \gX)$ defines a quasipseudometric on $\gX$ if 1) $\forall x \in \gX,~~d(x,x)=0$ and 2) $\forall x,y,z \in \gX,~~d(x,y)+d(y,z)\geq d(x,z)$.}
    \item With a comprehensive experiment on 12 standard GCRL benchmark environments, MRN consistently performs better than a range of common prior designs. We hypothesize that MRN's metric component speeds up learning, and the asymmetric residual component makes the approximation accurate.
\end{itemize}

\begin{figure*}[t]
    \centering
    \includegraphics[width=\textwidth]{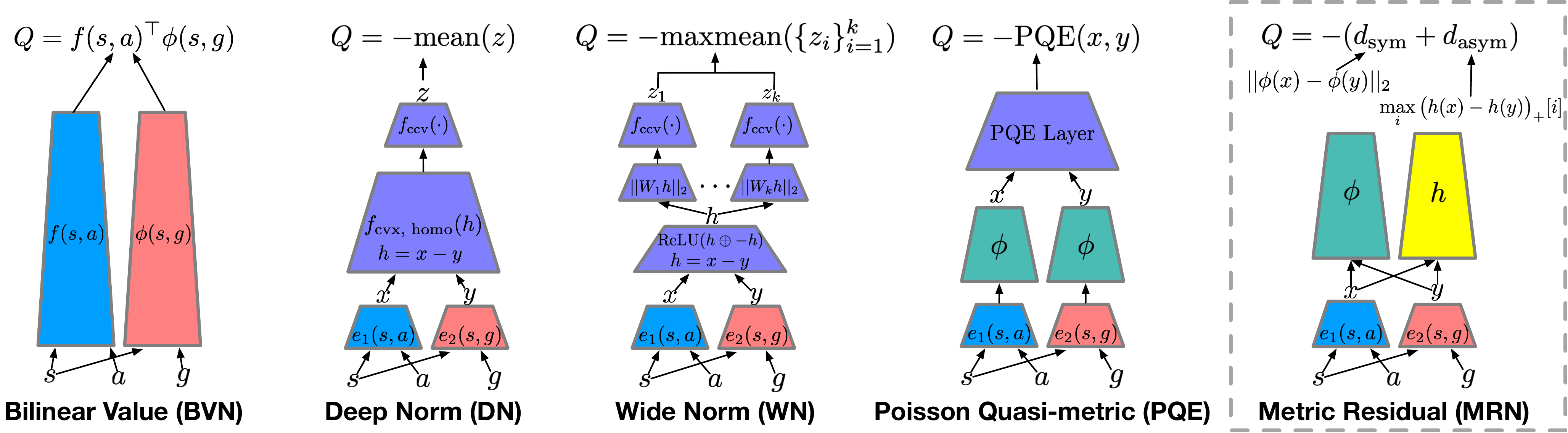}
    \caption{Comparison of different neural architecture designs for decomposing the action value function $Q(s,a,g)$ in GCRL. For the metric residual network (MRN), $d_\text{sym}(x,y) = ||\phi(x) - \phi(y)||_2$ and $d_\text{asym}(x,y) = \max_i \big(h(x) - h(y)\big)_+[i]$. Note that $x$ and $y$ are passed through the \emph{same} networks $\phi$ and $h$. In experiments, all networks are created with approximately the same number of parameters. In particular, the size of $\phi$ + $e_1$ (or $h$ + $e_2$) in MRN is the same as that of $f$ (or $\phi$) in BVN.}
    \label{fig:architecture}
    \vspace{-10pt}
\end{figure*}
\mysection{Preliminaries}
In this section, we first introduce the formal definition of the GCRL problem. 
Then we review the off-policy actor-critic algorithm DDPG and the hindsight experience replay (HER) method with goal relabeling for GCRL. DDPG+HER serves as the substrate algorithm within which we compare different neural architecture designs.

\mysubsection{Goal-conditioned Reinforcement Learning}
\label{sec:gcrl}
The goal-conditioned reinforcement learning problem can be formulated as a goal-conditioned Markov Decision Process, which is an 8-tuple $M_\text{gc} = (\gS, \gA, \gG, T, R, \gamma, \rho_0, \rho_\gG)$. Here, $\gS$, $\gA$, and $\gG$ are the state, action, and goal spaces of the agent. $T: \gS \times \gA \rightarrow \gS$ is the transition dynamics. $R: \gS \times \gA \times \gG \rightarrow \mathbb{R}$ is the reward function. $\gamma$ is the discount factor. $\rho_0$ and $\rho_\gG$ are the initial state and target goal distributions. Concretely, at the start of each episode, an initial state $s_0 \sim \rho_0$ and a goal $g \sim \rho_\gG$ are sampled. The agent starts at $s_0$, and at each time step $t \geq 0$, it chooses action from a policy $\pi$, e.g., $a_t \sim \pi(\cdot \mid s_t)$. It then receives a reward $r_{t,g} = R(s_t, a_t, g)$ and transitions to the next state $s_{t+1} \sim T(\cdot \mid s_t, a_t)$. The overall objective of the infinite-horizon GCRL problems is to maximize the following:
\begin{equation}
    \max_\pi J(\pi) = \mathbb{E}_{\substack{s_0 \sim \rho_0, ~g \sim \rho_\gG, \\ (s_t, a_t, r_{t,g}) \sim \pi, T, R}}\bigg[ \sum_{t=0}^\infty \gamma^t r_{t,g} ~\big|~s_0, g\bigg],
\end{equation}
where the reward $r_{t, g}$ is often defined as\footnote{There are two conventional setups: either the agent receives a reward of $1$ when reaching the goal and $0$ otherwise, or it receives $-1$ reward until it reaches the goal. In this work, without further specification, we assume the latter setting.}
\begin{equation}
    r_{t, g} = R(s_t, a_t, g) = \begin{cases}
    0 & M(s_t, a_t) = g \\
    -1 & \text{otherwise} \\
    \end{cases}.
    \label{eq:reward}
\end{equation}
Here $M: \gS \times \gA \rightarrow \gG$ is an onto mapping from the product space of the state and action to the goal space. Note that $M_\text{gc}$ does \emph{not} terminate when the goal is reached; the agent receives a reward every time it returns to the goal (or remains there).
In practice, usually $\gG \subset \gS$. For example, if the goal is for an autonomous car to reach any given velocity, then $\gG$ would naturally comprise the velocity component of the agent's full state space $\gS$. For technical reasons in Sec.~\ref{sec:method-triangle-ineq}, we also include $a_t$ in deciding whether the goal is reached.\footnote{When $r_{t,g}$ is determined by checking whether $M(s_t, a_t) = g$, as opposed to whether $M(s_{t+1} = g)$, it is a deterministic function involving no randomness from the transition dynamics $T$.}

\mysubsection{Off-Policy Actor-Critic}
Actor-critic methods like Deep Deterministic Policy Gradient (DDPG)~\citep{lillicrap2015continuous}, Twin Delayed DDPG (TD3)~\citep{fujimoto2018addressing} and Soft Actor-Critic (SAC)~\citep{haarnoja2018soft} are popular methods for solving GCRL. In this work, we consider DDPG as the underlying RL algorithm, as it is the most commonly used method in prior GCRL research. In DDPG and under the GCRL setting, the critic $Q(s, a, g)$ evaluates the expected discounted return to reach goal $g$ at state $s$ by choosing $a$. In particular, $Q(s, a, g)$ is called the \emph{universal value function approximator} (UVFA) as it extends the normal $Q(s, a)$ to a subset of goals indicated by $g$. Formally, we have
\begin{equation}
    \resizebox{0.9\linewidth}{!}{%
    $Q^\pi(s, a, g) = \mathbb{E}_{\substack{(s_t, a_t, r_{t,g}) \\\sim \pi, T, R}} \bigg[ \sum\limits_{t=0}^\infty \gamma^t r_{t,g} ~\big|~s_0=s, a_0=a, g\bigg].
$%
}
\end{equation}

The critic $Q$ is then updated by minimizing the temporal-difference (TD) error:
\begin{equation}
    \resizebox{0.9\linewidth}{!}{%
    $L(Q) = \mathbb{E}\bigg[ \bigg(r_{t,g} + \gamma Q(s_{t+1}, \pi(s_{t+1}), g) - Q(s_t, a_t, g) \bigg)^2\bigg]\,,
$%
}
\end{equation}
where the expectation is taken over $(s_t, a_t, s_{t+1}, g) \sim \gD$, and $\gD$ is the replay buffer that stores the agent's previous experience. The actor (\textit{a.k.a} the policy $\pi$) is then updated through the critic. In particular, the policy's gradient in DDPG is
\begin{equation}
    \nabla_\pi J(\pi) = \mathbb{E}\bigg[ \nabla_{a_t} Q(s_t, a_t, g)~\big|_{a_t = \pi(s_t)}\bigg].
\end{equation}

\mysubsection{Hindsight Experience Replay}
A popular technique for mitigating the sparse reward problem is hindsight experience replay (HER)~\citep{andrychowicz2017hindsight}. Assume the agent collects a trajectory $\tau = (s_0, a_0, \dots, s_T, a_T)$ using policy $\pi$ and corresponding initial state $s_0$ and goal $g$. If $\forall t, M(s_t, a_t) \neq g$, then the agent receives $-1$ reward every step and therefore learns nothing from this rollout trajectory. HER relabels the trajectory as if the agent were pursuing the goal $g \in \{M(s_1, a_1), \dots, M(s_T, a_T)\}$.
In this case, the trajectory can be viewed as a successful example of reaching a different goal. By representing $Q$ as a UVFA (e.g., $Q(s, a, g)$), one hopes that learning on these relabeled trajectories can help to generalize $Q$ to all different goals in the goal space, including those from the original goal distribution $\rho_\gG$. In practice, HER is often combined with an off-policy actor-critic algorithm. Following the original work of HER and prior work in GCRL, we use DDPG+HER as the base GCRL algorithm for policy learning and compare the performance across different neural architectures within the critic.
\mysection{Metric Residual Network}
One way to design a good neural architecture for UVFA is to base it on theoretically sound inductive biases so that the designed networks naturally inherit these inductive biases. Metric residual network (MRN) is designed based on the observation that the optimal universal action-value function $Q^*(s, a, g)$ in GCRL must satisfy the triangle inequality in a specific way. We formally prove this observation is correct in Sec.~\ref{sec:method-triangle-ineq} and introduce a novel network architecture that enforces the triangle inequality in Sec.~\ref{sec:method-mrn}.

\mysubsection{Triangle Inequality in GCRL}
\label{sec:method-triangle-ineq}
In this section, we start by showing that when $\gG \equiv \gS \times \gA$,~$Q^*(s, a, g)$ satisfies the triangle inequality. Then we extend the result to the general case when $\gG \not\equiv \gS \times \gA$.
\mysubsubsection{When $\bm{\gG \equiv \gS \times \gA}$}
Under this setting, $M$ (in Eq.~\eqref{eq:reward}) becomes the identity mapping. For convenience of notation, let $\gX = \gS \times \gA$, ~i.e. $x_t = (s_t, a_t)$. Given a policy $\pi$, the universal value function on $\gX$ then becomes
\begin{equation}
    Q^\pi(x, x_g) = \mathbb{E}_{\substack{(x_t, r_t) \\\sim \pi, T, R}} \bigg[ \sum_{t=0}^\infty \gamma^t r_{t,g} ~\big|~x_0 = x, g = x_g\bigg].
\label{eq:specific-v-pi}
\end{equation}
The optimal universal value function is therefore:
\begin{equation}
Q^*(x, x_g) = \max_\pi Q^\pi(x, x_g).
\label{eq:specific-v}
\end{equation}
To build intuition regarding why $Q^*$ satisfies the triangle inequality, consider any $x^1, x^2$ and $x^3$ in $\gX$ (Fig.~\ref{fig:intuition}).

\begin{figure}[h!]
    \centering
    \includegraphics[width=0.8\columnwidth]{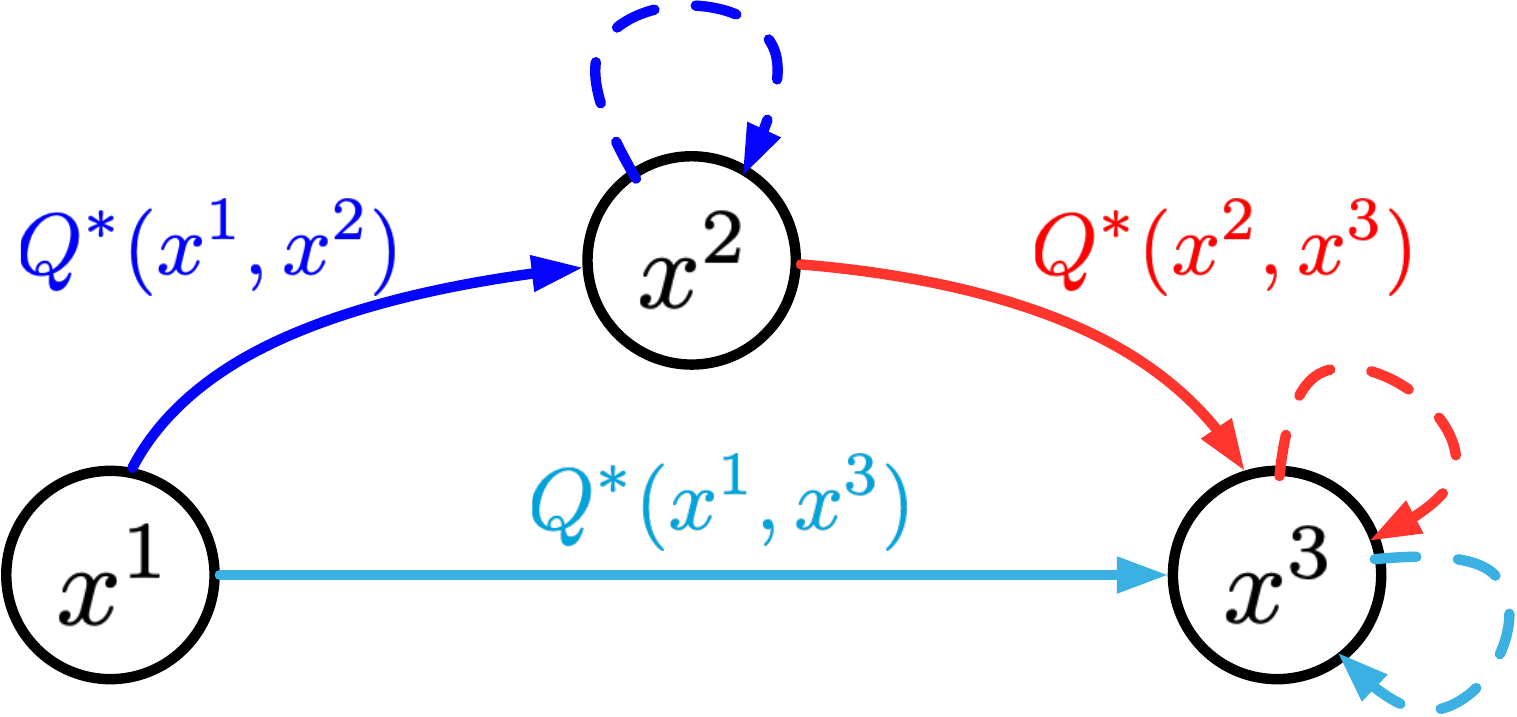}
    \vspace{-8pt}
    \caption{Intuition for triangle inequality in GCRL.}
    \label{fig:intuition}
\end{figure}
Think of $Q^*(x^1, x^2)$ as consisting of the ``cost'' of (discounted reward for) reaching $x^2$ ($\color{blue}{\rightarrow}$) plus the ``cost'' of staying at $x^2$ ($\color{blue}{\dashrightarrow}$). Decompose $Q^*(x^2, x^3)$ and $Q^*(x^1, x^3)$ similarly. Then, by definition of $Q^*(x^1, x^3)$, a proof sketch is
\begin{equation}
    \begin{split}
    \color{cyan}{Q^*(x^1, x^3)} &= ({\color{cyan}{\rightarrow}}) + ({\color{cyan}{\dashrightarrow}}) \geq ({\color{blue}{\rightarrow}}) + ({\color{red}{\rightarrow}}) + ({\color{red}{\dashrightarrow}}) \\
    &\geq \color{blue}{Q^*(x^1, x^2)} + \color{red}{Q^*(x^2, x^3)}.
    \end{split}
\end{equation}
This statement is formally presented in Thm.~\ref{thm:triangle-inq}, and the proof is provided in Appendix~\ref{sec:apx-thm1}.

\begin{thm} Consider the following Goal-conditioned MDP $M_{gc} = (\gS, \gA, \gG, T, R, \gamma, \rho_0, \rho_g)$ as described in Sec.~\ref{sec:gcrl} except that $\gG\equiv \gS \times \gA$ . Then the optimal universal value function $Q^*$ defined in Eq.~\eqref{eq:specific-v} satisfies the triangle inequality: $~\forall~ x^1, x^2, x^3 \in \gX$,
    \begin{equation}
       Q^*(x^1, x^2) + Q^*(x^2, x^3) \leq Q^*(x^1, x^3).
    \end{equation}
\label{thm:triangle-inq}
\vspace{-10pt}
\end{thm}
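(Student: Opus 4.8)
The plan is to pass to the ``cost'' formulation. Write $C^\pi(x,x_g):=-Q^\pi(x,x_g)=\mathbb{E}^\pi\!\big[\sum_{t\ge 0}\gamma^t\,\mathbbm{1}[x_t\neq x_g]\mid x_0=x\big]$ and $C^*:=-Q^*=\inf_\pi C^\pi$; since $\gG\equiv\gX$ and $M$ is the identity, Eq.~\eqref{eq:reward} gives $r_{t,g}=-\mathbbm{1}[x_t\neq x_g]$, and the claimed inequality becomes $C^*(x^1,x^3)\le C^*(x^1,x^2)+C^*(x^2,x^3)$. Two elementary facts will be used repeatedly: $0\le C^\pi(x,x_g)\le \tfrac1{1-\gamma}$ for every policy and every pair (each summand lies in $\{0,1\}$), and---because the goal is a state--action pair and $r_{t,g}$ is a function of $x_t$ alone---the event $\{x_t=x_g\}$ is measurable with respect to the natural filtration $\mathcal F_t=\sigma(x_0,\dots,x_t)$, which is what lets us splice policies cleanly at a hitting time.

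First I would fix $x^1,x^2,x^3\in\gX$ and two \emph{arbitrary} stationary policies $\pi_1,\pi_2$, and build a composite policy $\tilde\pi$ for goal $x^3$ from $x^1$: run $\pi_1$ starting at $x^1$ until the stopping time $\sigma:=\inf\{t\ge 0:\,x_t=x^2\}$, then follow $\pi_2$ from that point on (and keep $\pi_1$ forever if $\sigma=\infty$). Since $\tilde\pi$ coincides with $\pi_1$ up to and including time $\sigma$, the pre-$\sigma$ part of the $\tilde\pi$-trajectory has the same law as under $\pi_1$ launched from $x^1$; and since $x_\sigma=x^2$ on $\{\sigma<\infty\}$, the strong Markov property (time-homogeneous $T$, stationary $\pi_2$) gives that, conditionally on $\mathcal F_\sigma$, the post-$\sigma$ trajectory is distributed as a $\pi_2$-trajectory launched from $x^2$.

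Next I would split $C^{\tilde\pi}(x^1,x^3)=A+B$, where $A$ collects the discounted cost over $t<\sigma$ and $B$ the discounted cost over $t\ge\sigma$ on $\{\sigma<\infty\}$. For $A$: by definition of $\sigma$, $x_t\neq x^2$ for all $t<\sigma$, hence $\mathbbm{1}[x_t\neq x^3]\le 1=\mathbbm{1}[x_t\neq x^2]$ there, so $A=\mathbb{E}^{\pi_1}\!\big[\sum_{t<\sigma}\gamma^t\mathbbm{1}[x_t\neq x^3]\big]\le\mathbb{E}^{\pi_1}\!\big[\sum_{t<\sigma}\gamma^t\mathbbm{1}[x_t\neq x^2]\big]\le C^{\pi_1}(x^1,x^2)$. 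This is exactly the step where the nonnegative ``cost of remaining at $x^2$'' buried inside $C^{\pi_1}(x^1,x^2)$ is discarded for free---so we never have to argue that the agent can actually stay at $x^2$. For $B$: conditioning on $\mathcal F_\sigma$ and using the strong Markov identity above, the discounted-from-$\sigma$ tail has conditional expectation $C^{\pi_2}(x^2,x^3)$, so $B=\mathbb{E}\!\big[\mathbbm{1}[\sigma<\infty]\gamma^\sigma\big]\,C^{\pi_2}(x^2,x^3)\le C^{\pi_2}(x^2,x^3)$, using $\mathbb{E}[\mathbbm{1}[\sigma<\infty]\gamma^\sigma]\in[0,1]$ and $C^{\pi_2}\ge 0$. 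Hence $C^*(x^1,x^3)\le C^{\tilde\pi}(x^1,x^3)=A+B\le C^{\pi_1}(x^1,x^2)+C^{\pi_2}(x^2,x^3)$, and taking the infimum over $\pi_1$ and over $\pi_2$ yields $C^*(x^1,x^3)\le C^*(x^1,x^2)+C^*(x^2,x^3)$, i.e.\ the theorem after negating.

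I expect the main obstacle to be making the splice rigorous: verifying that $\sigma$ is an $\mathcal F_t$-stopping time, that $\gamma^\sigma\mathbbm{1}[\sigma<\infty]$ is $\mathcal F_\sigma$-measurable, and that ``restart at $\sigma$'' is a bona fide application of the strong Markov property for discounted MDPs under stationary policies (together with the standard fact that stationary Markov policies suffice to approach $C^*$). The remaining work is routine: the $\sigma=\infty$ bookkeeping, exchanging $\mathbb{E}$ and $\sum$ (valid since the summands are nonnegative), and translating back from $C^*$ to $Q^*$. It is also reassuring to check the degenerate case where $x^2$ is unreachable from $x^1$: then $C^{\pi_1}(x^1,x^2)=\tfrac1{1-\gamma}$ for every $\pi_1$, while $C^*(x^1,x^3)\le\tfrac1{1-\gamma}$ and $C^*(x^2,x^3)\ge 0$, so the inequality is immediate---consistent with the general argument, which at no point assumed that $\pi_1$ reaches $x^2$.
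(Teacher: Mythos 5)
Your argument is correct and is essentially the same as the paper's: both splice a policy for reaching $x^2$ with a policy for reaching $x^3$ at the first hitting time of $x^2$, bound the pre-splice cost by $C^{\pi_1}(x^1,x^2)$ using nonnegativity of the discarded tail, and bound the post-splice cost via the factor $\mathbb{E}[\gamma^\sigma]\le 1$. The only differences are presentational --- you argue directly with arbitrary policies and an infimum where the paper argues by contradiction assuming optimal Markov policies exist, and your version handles the $\sigma=\infty$ case slightly more explicitly --- but the decomposition and both key inequalities coincide.
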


Concurrently, \citet{wang2022learning} present a claim similar to Thm.~\ref{thm:triangle-inq}, where they show the optimal goal-reaching plan costs in MDPs form a quasipseudometric. The optimal goal-reaching plan cost $C(x_1, x_2)$ can be viewed as the minimum expected first hitting time from $x_1$ to $x_2$. We emphasize the two differences between Thm.~\ref{thm:triangle-inq} and their statement: \textbf{1)} the formulation in Sec.~\ref{sec:gcrl} does not assume the agent \emph{terminates} once reaching the goal.  Thus the optimal behavior will be \emph{staying} at the goal as long as possible, if necessary, leaving the goal state and returning as quickly as possible. By contrast, the optimal behavior for the first hitting time problem is \emph{hitting} the goal as soon as possible, even if the agent passes the goal, never to return. In practice, sometimes it is possible to change one type of problem to the other by modifying the underlying MDP, but we submit that the problem formulation we consider in Sec.~\ref{sec:gcrl} is more relevant and general, as it is the setting used in almost all common GCRL benchmarks and it can always subsume the first hitting time problem by enlarging the state space.\footnote{One can introduce an extra sink state where the agent stays once it reaches the goal to subsume the first hitting time problem into our formulation.} \textbf{2)} on the other hand, \citet{wang2022learning} state that the optimal goal-reaching plan costs form a quasipseudometric, while we only claim that $Q^*$ satisfies the triangle inequality. The subtle difference is that within the formulation in Sec.~\ref{sec:gcrl}, it is possible that $Q^*(x, x) \neq 0$ for some $x \in \gX$.\footnote{For instance, there are no actions that ensure the agent can stay at $x$ forever, i.e., it needs to come out and return to $x$ repeatedly.} Hence, our formulation is less restrictive.

However, as we will show next, one can still approximate $Q^*$ well using a quasipseudometric in a larger space. For instance, assume we have a copy of $x$ denoted $\hat{x}$ for every $x \in \gX$. Call the space $\gY = \gX \cup \hat{\gX}$, where $\hat{\gX} = \{ \hat{x} \}$. Then consider the following function $\hat{Q}: \gY \times \gY \rightarrow \mathbb{R}_{\leq 0}$:
\begin{equation}
    \hat{Q}^*(a, b) = \begin{cases}
        0 & a = b \\
        Q^*(a, a) & b = \hat{a}, ~a \in \gX \\
        Q^*(a, b) & a\neq b, ~a,b \in \gX \\
        -\infty & \text{otherwise}.
    \end{cases}
    \label{eq:transform}
\end{equation}
It is easy to check that \textbf{1)} $-\hat{Q}^*$ is a quasipseudometric on $\gY$, \textbf{2)} $\forall x, y \in \gX$, $\hat{Q}^*(x, e(y)) = Q^*(x, y)$, where $e(y) = y$ if $x \neq y$ and $e(y) = \hat{y}$ if $x = y$. Therefore, though $Q^*$ only needs to satisfy triangle inequality on $\gX$, by mapping $y$ to $e(y)$, $\hat{Q}^*$, a negated quasipseudometric, represents $Q^*$.
\mysubsubsection{When $\bm{\gG \not\equiv \gS \times \gA}$} Under this setting, $M$ (in Eq.~\eqref{eq:reward}) is an onto mapping, meaning that there might exist multiple $(s, a)$ pairs mapping to the same goal. If the underlying transition dynamics $T$ is \emph{deterministic}, one observes that
\begin{equation}
    Q^*(x, g) = \sup_{x': M(x') = g} Q^*(x, x').
    \label{eq:qxg_is_qxx}
\end{equation}
Therefore, when the $\sup$ is attainable, denote $x_g = \argmax_{x': M(x') = g} Q^*(x, x')$ then $Q^*(x, g) = Q^*(x, x_g)$. The proof and more discussion are in Appendix~\ref{sec:apx-obs}. 
%
As a result, we can view $Q^*(s, a, g)$ as a specific ``distance"\footnote{Quotation marks are used to indicate that $Q^*$ might not be a true distance function as it is not guaranteed to be symmetric.} between $x=(s,a)$ and $x_g$, which leads to our design of the neural architecture in the following section. 

\mysubsection{Network Design}
\label{sec:method-mrn}
In this section, we first present a novel construction for quasipseudometrics and prove that it universally approximates any quasipseudometric. Then we present a novel neural architecture for GCRL based on this construction and show in a toy example why such design improves generalization and therefore sample efficiency. Lastly, we provide a unified view of existing architecture designs for GCRL.
\mysubsubsection{A Novel Construction for Quasipseudometrics} From the observations in Eq.~\eqref{eq:transform}-\eqref{eq:qxg_is_qxx}, we know $Q^*(s,a,g) = Q^*(x, x_g)$, where the latter can be represented by a negated quasipseudometric $\hat{Q}^*$. Therefore, we first present a novel construction for quasipseudometrics in Prop.~\ref{prop:quasipseudometric}.

\begin{pro}
    Assume $x, y \in \gX$, define
\begin{equation}
    d(x, y) = d_\text{sym}(x, y) + \underbrace{\max_{i \in [K]} \big(h_i(x) - h_i(y)\big)_{+}}_{d_\text{asym}(x, y)},
    \label{eq:qm}
\end{equation}
where $(\cdot)_+ = \max(\cdot, 0)$ is the Rectified Linear Unit (ReLU) function, $h_i: \gX \rightarrow \mathbb{R}$, and $(\gX, d_\text{sym})$ is a metric. Then $d$ ensures the three axioms of quasipseudometrics:
\begin{itemize}
    \item Non-negativity:
    ~~~~~~~$\forall~ x, y \in \gX, ~~~~ d(x,y) \geq 0.
    $
    \item Identity:
    ~~~~~~~$\forall~x \in \gX, ~~~~ d(x, x) = 0.
    $
    \item Triangle inequality:
    $$
        \forall~ x, y, z \in \gX, ~~~~d(x,z) \leq d(x, y) + d(y,z).
    $$
\end{itemize}
In practice, we choose $d_\text{sym}(x, y) = ||\phi(x) - \phi(y)||_2$ where $\phi$ is an arbitrary neural network. Clearly, $\forall x, y$, $d_\text{sym}(x,y) = d_\text{sym}(y,x)$, and with the ReLU function, $d_\text{asym}$ is asymmetric, i.e. $\exists x,y$, $d_\text{asym}(x,y) \neq d_\text{asym}(y,x)$.
\label{prop:quasipseudometric}
\end{pro}
The proof is provided in Appendix~\ref{sec:apx-prop-quasipseudometric}.
Prop.~\ref{prop:quasipseudometric} ensures that any construction of $d$ in the form of Eq.~\eqref{eq:qm} is a quasipseudometric, but does not guarantee that any quasipseudometric on $\gX$ can be represented in the form of $d$. The following theorem confirms that this is indeed the case.
\begin{thm}[Universal approximation of $d$]
Given any continuous function $\nu:\gX \times \gX \rightarrow \mathbb{R}$, where $(\gX, \nu)$ is a quasipseudometric and $\gX$ is compact. Then $\forall \epsilon > 0$, with a sufficiently large $K$, there exists a quasipseudometric $d$ in the form of Eq.~\eqref{eq:qm} such that
$$
\forall x, y \in \gX, ~~|d(x, y) - \nu(x, y)| \leq \epsilon.
$$
\vspace{-12pt}
\label{thm-universal}
\end{thm}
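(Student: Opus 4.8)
The plan is to adapt the classical Fr\'echet--Kuratowski embedding of a metric space into $\ell_\infty$ to the asymmetric (quasipseudometric) setting. The observation is that the asymmetric term $d_{\text{asym}}(x,y)=\max_{i\in[K]}\big(h_i(x)-h_i(y)\big)_+$ does essentially all of the work: if the coordinate functions $h_i$ are taken to be the ``distance-to-landmark'' maps $h_i(\cdot)=\nu(\cdot,z_i)$ for a finite family of landmarks $\{z_i\}_{i=1}^{K}$ that is dense enough in $\gX$, then $d_{\text{asym}}$ is already uniformly within $\epsilon$ of $\nu$. The metric summand $d_{\text{sym}}$ in Eq.~\eqref{eq:qm} is then only needed to meet the formal requirement that $(\gX,d_{\text{sym}})$ be a genuine metric, so I would take it to be an arbitrarily small multiple of any fixed background metric on $\gX$ (such a metric exists in the GCRL setting, where $\gX=\gS\times\gA$ lies in Euclidean space, and more generally whenever $\gX$ is metrizable).

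Concretely, the steps are: \textbf{(1)} Symmetrize $\nu$ by $\rho(x,y)=\max\{\nu(x,y),\nu(y,x)\}$; a quick check shows $\rho$ is a continuous pseudometric (symmetry and $\rho(x,x)=0$ are immediate, and $\max\{a+b,c+d\}\le\max\{a,c\}+\max\{b,d\}$ gives its triangle inequality). \textbf{(2)} Since $\rho$ is continuous, the balls $B_\rho(z,\delta)$ are open, so $\{B_\rho(z,\delta)\}_{z\in\gX}$ is an open cover of the compact $\gX$; extract a finite subcover with centres $z_1,\dots,z_N$ and set $h_i(x)=\nu(x,z_i)$ (continuous). \textbf{(3)} \emph{Upper bound}: for every $i$, $\nu(x,z_i)\le\nu(x,y)+\nu(y,z_i)$ gives $h_i(x)-h_i(y)\le\nu(x,y)$, and since $\nu(x,y)\ge0$ we get $\big(h_i(x)-h_i(y)\big)_+\le\nu(x,y)$; hence $d_{\text{asym}}(x,y)\le\nu(x,y)$. \textbf{(4)} \emph{Lower bound}: given $y$, choose $i$ with $y\in B_\rho(z_i,\delta)$, so $\nu(z_i,y)<\delta$ and $\nu(y,z_i)<\delta$; then $h_i(x)-h_i(y)=\nu(x,z_i)-\nu(y,z_i)\ge\big(\nu(x,y)-\nu(z_i,y)\big)-\nu(y,z_i)\ge\nu(x,y)-2\delta$, so $d_{\text{asym}}(x,y)\ge\nu(x,y)-2\delta$. \textbf{(5)} With a fixed metric $m$ on $\gX$ of (finite) diameter $D$, put $d_{\text{sym}}=\eta\,m$ and $d=d_{\text{sym}}+d_{\text{asym}}$; taking $\delta=\epsilon/2$ and $\eta=\epsilon/D$ yields $\nu(x,y)-\epsilon\le d_{\text{asym}}(x,y)\le d(x,y)\le\nu(x,y)+\eta D=\nu(x,y)+\epsilon$ for all $x,y$, i.e.\ $|d(x,y)-\nu(x,y)|\le\epsilon$. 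Take $K=N$; for any larger $K$, pad with the zero map $h_i\equiv 0$, which contributes $0$ to $d_{\text{asym}}$. No separate verification of the quasipseudometric axioms is needed, since by Prop.~\ref{prop:quasipseudometric} every $d$ of this form is automatically a quasipseudometric.

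The step I expect to be the crux is the lower bound, and specifically handling the \emph{asymmetry} of $\nu$: a one-sided $\delta$-net does not suffice, because bounding $h_i(x)-h_i(y)=\nu(x,z_i)-\nu(y,z_i)$ from below requires a landmark $z_i$ that is $\delta$-close to $y$ in \emph{both} directions ($\nu(z_i,y)$ and $\nu(y,z_i)$ small at once), which is precisely why the symmetrized pseudometric $\rho$ and the compactness of $\gX$ are invoked. A minor but necessary subtlety is the Eq.~\eqref{eq:qm} stipulation that $d_{\text{sym}}$ be a true metric rather than the zero pseudometric; this forces the small background-metric summand, which is harmless because on compact $\gX$ it is uniformly $O(\epsilon)$.

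Finally, if one wants the $h_i$ to be realizable by the ReLU networks actually used in MRN (rather than the abstract continuous maps $\nu(\cdot,z_i)$), I would append a routine density argument: approximate each $h_i$ uniformly within $\epsilon'$ by an MLP $\tilde h_i$; by Prop.~\ref{prop:quasipseudometric} the resulting $\tilde d$ is still a quasipseudometric, and $\big|\tilde d_{\text{asym}}-d_{\text{asym}}\big|\le 2\epsilon'$ everywhere, so the extra error is absorbed into $\epsilon$ by rescaling the constants above.
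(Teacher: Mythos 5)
Your proof is correct and follows essentially the same route as the paper's: both realize $\nu$ through Kuratowski-type landmark maps $h_i(\cdot)=\nu(\cdot,z_i)$ over a finite $\delta$-net of the compact space $\gX$, using the triangle inequality for the upper bound $d_\text{asym}(x,y)\le\nu(x,y)$ and the closeness of some landmark to $y$ for the matching lower bound. Your execution is in fact tighter in two places where the paper is loose: you derive the uniform lower bound $\nu(x,y)-2\delta$ directly from the triangle inequality (the paper instead invokes continuity of $f(z)=\big(\nu(x,z)-\nu(y,z)\big)_+$ for a \emph{fixed} pair $(x,y)$, which strictly requires a uniform-continuity argument to make $\delta$ independent of $(x,y)$), and you add a small background-metric summand so that $d_\text{sym}$ is a genuine metric, whereas the paper simply sets $d_\text{sym}\equiv 0$.
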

The proof is provided in Appendix~\ref{sec:apx-thm2}.

\mysubsubsection{A Novel Architecture for GCRL}
Based on the novel construction of a quasipseudometric in Eq.~\eqref{eq:qm}, we introduce the metric residual network (MRN) for GCRL problems, which consists of two parts:
\begin{itemize}
    \item \textit{Projection}: with two encoders $e_1: \gS \times \gA \rightarrow \gZ$ and $e_2: \gS \times \gG \rightarrow \gZ$, we project $(s, a, g)$ to two latent vectors:
        $$h_{sa} = e_1(s, a),~~~h_{sg} = e_2(s, g).$$
    \item \textit{Enforcing triangle inequality}: with $d_\text{sym}$ and $d_\text{asym}$ defined in Eq.~\eqref{eq:qm}, represent:
    \vspace{-3pt}
        $$Q(s,a,g) = - \bigg(d_\text{sym}(h_{sa}, h_{sg}) + d_\text{aysm}(h_{sa}, h_{sg})\bigg).$$ 
    \vspace{-10pt}
\end{itemize}
In the projection step, one can view $e_1(s, a)$ and $e_2(s, g)$ as summarizing the sufficient statistics of $x$ and $x_g$ and projecting them to the same latent space $\gZ$. In theory, $x_g$ should be a function of $s, a, g$. The reason why we omit $a$ from the inputs to $e_2$ is that otherwise $e_2$ has all the information needed to predict $Q^*$ directly, which makes the decomposition to $x$ and $x_g$ meaningless and thus slows down training (See the ablation study in Sec.~\ref{sec:exp-ablation} for empirical evidence). After the projection step, $d_\text{sym}$ and $d_\text{asym}$ act as the inductive bias that constrains the learning on the latent space $\gZ$. \textbf{The core intuition} behind decomposing $d$ into $d_\text{sym}$ and $d_\text{asym}$ is that $d_\text{sym}$, due to its symmetry, \emph{improves sample efficiency}, while $d_\text{asym}$ \emph{ensures the approximation is accurate}.
\mysubsubsection{Toy Example} We provide a toy example in Fig.~\ref{fig:toy} (top-left) to validate \textbf{1)} $d_\text{sym}$ helps improve generalization, and \textbf{2)} $d_\text{asym}$ helps improve the modeling accuracy. The environment is a square of size $1$. The agent navigates freely in the white region (w/ width $\eta$), and is constrained to move in directions that do not decrease its height in the indigo region. Clearly, the shortest path length $d^*(x_0, x_g)$ is a quasipseudometric. We then approximate $d^*$ by $d_\theta$. To test $\theta$'s generalization ability, we consider supervised learning on 20 data points $\{(x_0, x_g), d^*(x_0, x_g)\}$, where $x_0, x_g \in [0,1]^2$ are sampled uniformly at random.\footnote{For simplicity,  $d_\theta$ directly regresses to $d^*$ without doing RL, hence no actions are involved.}
In the rest of Fig.~\ref{fig:toy}, we plot the \emph{generalization} error over training iteration for different thresholds $\eta$, when $\theta$ is either MRN or MRN with only $d_\text{sym}$.\footnote{MRN with only $d_\text{sym}$ is expanded to match the size of MRN.} For reference, we also plot the best generalization error of the bilinear value network (BVN)~\citep{hong2022bilinear} and monolithic network, as other designs including the deep/wide norms~\citep{pitis2020inductive} fail to work in this toy example.
The generalization error is evaluated on 10000 randomly sampled $(x_0, x_g)$ pairs different from the training data. Each curve is plotted to the iteration that reaches the lowest generalization error. From the figures, we observe that \textbf{1)} as threshold $\eta$ increases, the generalization error of MRN decreases because $d^*$ becomes more symmetric; and \textbf{2)} the symmetric part alone (\textcolor{mygreen}{green}) does not approximate $d^*$ well as $d^*$ is asymmetric.

\begin{figure}[t!]
    \centering
    \includegraphics[width=\columnwidth]{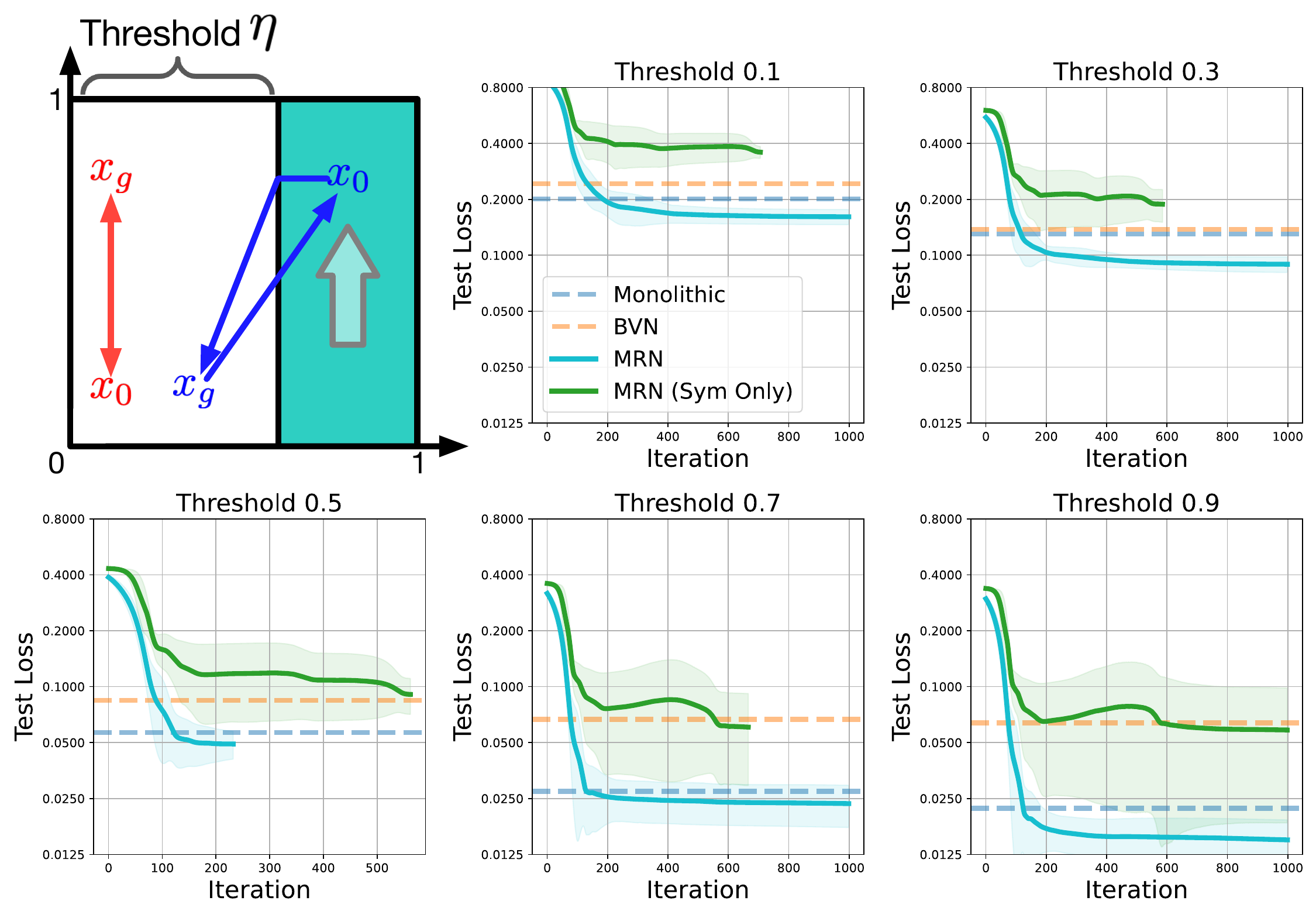}
    \vspace{-20pt}
    \caption{A toy example for illustrating the importance of both the symmetric and asymmetric parts in MRN.}
    \label{fig:toy}
    \vspace{-15pt}
\end{figure}

To summarize, the design of MRN not only incorporates the inductive bias that $Q^*$ must satisfy the triangle inequality but also ensures it can universally approximate any $Q^*$ with good generalization ability by having both the $d_\text{sym}$ and $d_\text{asym}$ parts. For the convenience of understanding and implementation of MRN, we provide the forward pass of MRN in PyTorch-like pseudocode in Alg.~\ref{alg:MRN} in the Appendix.
\mysubsubsection{A Unified View of Existing GCRL Architectures} Recent work proposed Deep Norm (DN) and Wide Norm (WN)~\citep{pitis2020inductive}, which also satisfy the triangle inequality by design. However, both DN and WN are norm-based networks, i.e., they are restricted to only approximate norm-induced quasi-metrics. On the other hand, the conventional monolithic multi-layer perceptron (MLP) modeling $Q(s, a, g)$, and the recently proposed bilinear value network (BVN)~\citep{hong2022bilinear} both inherit the universal approximation guarantee from general neural networks, but they do not enforce the triangle inequality, thus making them less computationally efficient to learn. The Poisson Quasi-metric Embedding (PQE)~\citep{wang2022learning} enjoys the same theoretical guarantee as MRN. However, MRN is simpler in structure and empirically performs better than PQE (See Sec.~\ref{sec:exp-main}).
As such, all existing designs discussed so far can be summarized in Fig.~\ref{fig:venn}.
\begin{figure}[h!]
    \centering
    \includegraphics[width=0.9\columnwidth]{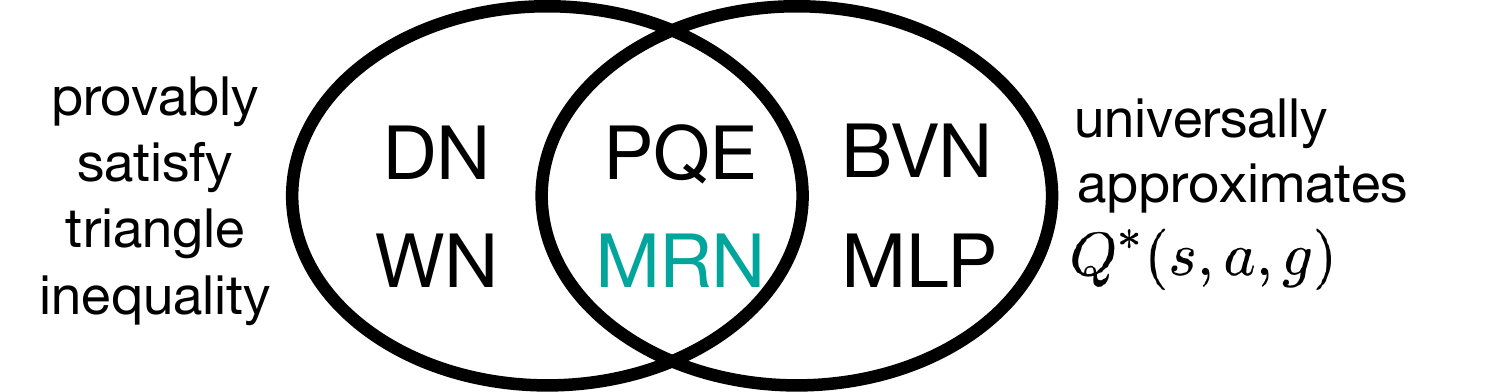}
    \caption{A Venn diagram for existing networks for GCRL.}
    \vspace{-15pt}
    \label{fig:venn}
\end{figure}
\mysection{Related Work}
A brief summary of previous attempts to approximate the action-value function is presented in the following.

Early work considered linear functions as parametric models for value function approximation~\citep{sutton2018reinforcement,littman2001predictive,singh2003learning}.  Successor featured~\citep{dayan1993improving,kulkarni2016deep,borsa2018universal} is another line of work that assumes the reward is a linear function of the state-action features (e.g. $R(s,a) = w^\top \phi(s,a)$). 
Therefore, $Q(s,a) = w^\top \psi(s,a)$ where $\psi(s,a)$ is the discounted expected state-action feature occupancy. While linear functions are well-studied and relatively supportive of proving convergence properties, they can be too restrictive to approximate the value function in practical problems. 
Laplacian reinforcement learning~\citep{mahadevan2005value} instead represents the value function based on a set of Fourier series. 
All of the above work discussed so far learns the action-value function in a single-task setting, without considering the more general goal-conditioned setting. 

More recently, researchers have considered how to decompose the \emph{universal value function approximator} $Q(s,a,c)$ where $c$ can be any context variable (e.g. a goal $g$ or a specific task $k$). In particular, for GCRL, \citet{schaul2015universal} consider the low-rank bilinear decomposition, i.e. $Q(s, a, g) = f(s, a)^\top\phi(g)$. The motivation behind bilinear value networks is that decomposing $Q(s,a,g) = f(s,a)^\top \phi(s,g)$ may result in better learning efficiency compared to the low-rank bilinear decomposition~\citep{hong2022bilinear}. 
Besides the above approaches designed specifically for GCRL, \citet{pitis2020inductive} proposed the Deep Norm (DN) and Wide Norm (WN) families of neural networks that respect the triangle inequality. 
Note that norms always respect the triangle inequality, and DN is a network that essentially computes some norm between two points $x$ and $y$. However, to relax one additional assumption of norms called homogeneity, i.e. $D(c\cdot(x-y)) = c D(x-y)$, DN adds a concave function on top of the convex neural network. But even with this relaxation, DN still cannot represent all functions that respect the triangle inequality as the input to DN is restricted to be $x-y$. WN is essentially any linear or maxout combination of DN networks. While DN and WN can approximate a rich family of functions, they are still restricted to norm-induced functions. Concurrently, Possion Quasi-metric Embedding (PQE) has been proposed, which approximates any quasipseudometric from a distribution perspective~\citep{wang2022learning}. PQE improves upon DN/WN as it can universally approximate any quasipseudometric function. However, as explained in detail in  Sec.~\ref{sec:method-triangle-ineq}, when applied to RL problems, PQE considers the more restrictive first hitting-time problem, while we consider the general GCRL setting. 
By explicitly decomposing the network into a metric part plus an asymmetric residual part, MRN can learn more efficiently. In addition, the design of DN, WN, and PQE are relatively complicated and may require careful hyperparameter tuning in practice. By contrast, MRN is much simpler and involves no hyperparameters.

\section{Experimental Results}
\vspace{5pt}
Experiments are designed to validate two hypotheses: 1) MRN achieves better sample efficiency compared to the baseline methods (Sec.~\ref{sec:exp-main}), and 2) $d_\text{sym}$ and $d_\text{asym}$ are both important in the design of MRN (Sec.~\ref{sec:exp-ablation}). We start by introducing the benchmarks and baseline methods and provide the experiment details. Then we provide the results and analysis indicating that they confirm our hypotheses.
\myparagraph{Benchmarks} We use the standard GCRL benchmarks~\citep{plappert2018multi} including all manipulation tasks on the \emph{Fetch} robot and \emph{Shadow-hand} (See Fig.~\ref{fig:env-vis}).
\vspace{-5pt}
\begin{figure}[h!]
    \centering
    \includegraphics[width=\columnwidth]{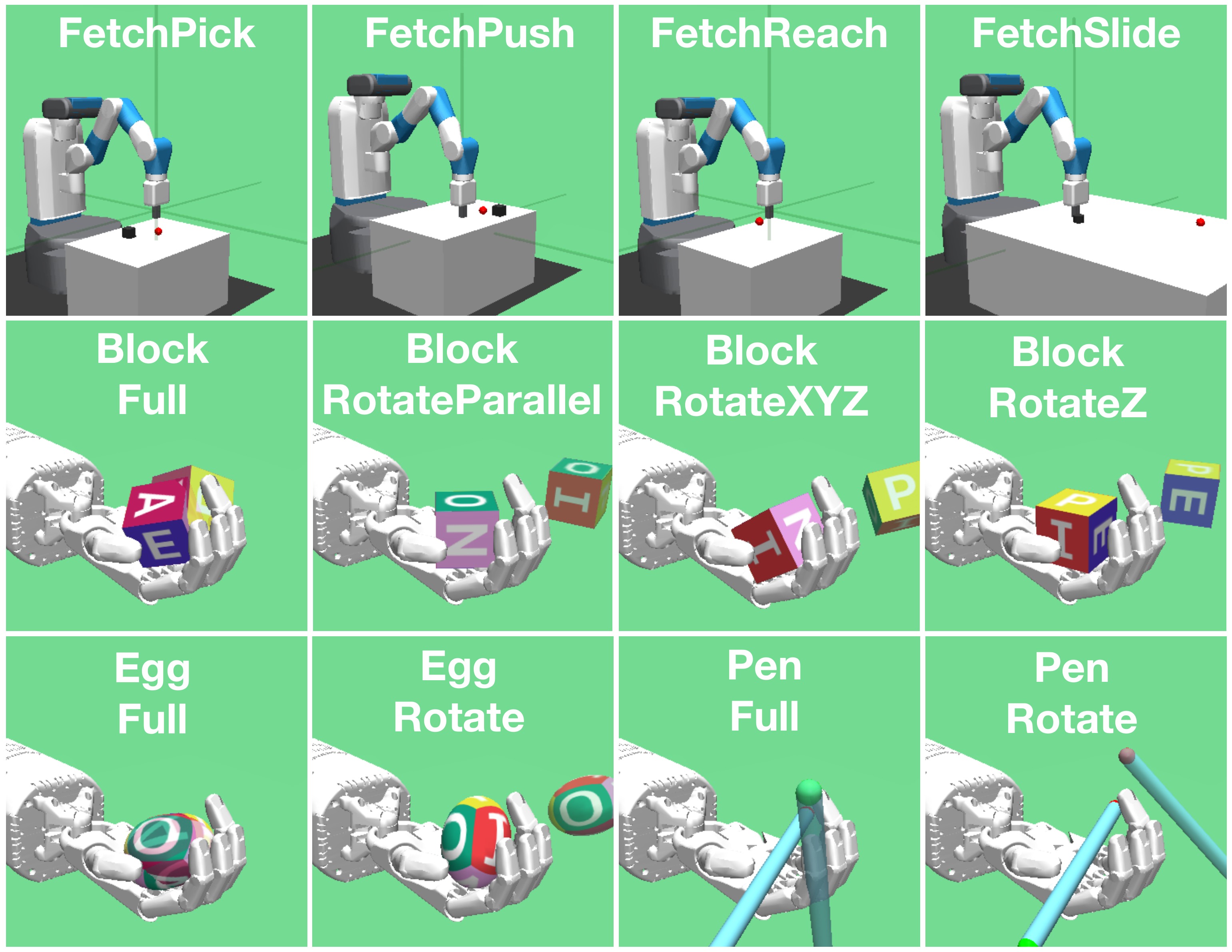}
    \vspace{-20pt}
    \caption{The 12 GCRL environments.}
    \label{fig:env-vis}
    \vspace{-10pt}
\end{figure}
\myparagraph{Baselines} We compare MRN with a comprehensive list of existing UVFA architectures as illustrated in Fig.~\ref{fig:architecture}:
\begin{itemize}
    \item \textit{Monolithic}: $Q(s,a,g)$ is an unconstrained neural network that directly maps $(s,a,g)$ to the value.
    \item \textit{Bilinear Value Network} (BVN)~\citep{hong2022bilinear}: $Q(s,a,g) = f(s,a)^\top \phi(s,g)$.
    \item \textit{Deep/Wide Norms} (DN/WN)~\citep{pitis2020inductive}. DN/WN are networks that represent norm-induced metrics. The original DN/WN operates on the space where the metric is defined. To adapt them to the GCRL setting, we apply DN/WN to $h_{sa}$ and $h_{sg}$ with the same encoders as MRN.
    \item \textit{Poisson Quasi-metric Embedding} (PQE)~\citep{wang2022learning}. PQE represents quasi-metrics from a distribution perspective.
\end{itemize}
\vspace{-4pt}
\myparagraph{Algorithm and architecture} We use deep deterministic policy gradient (DDPG)~\citep{lillicrap2015continuous} with hindsight experience replay (HER)~\citep{andrychowicz2017hindsight} as the base GCRL algorithm, within which we test all different architectures. Specifically, we only change the critic architecture $Q(s,a,g)$. We constrain different networks to have the same number of parameters to equalize learning capacity. The monolithic network is a three-hidden-layer multi-layer perception (MLP) with 256 neurons per layer with ReLU activation (e.g. [\texttt{linear}-\texttt{relu}]$\times 3$ + \texttt{linear}). BVN has two separate networks $f$ and $\phi$, each of which is a three-layer MLP with 176 neurons per layer. For all other networks, we first define two encoders $e_1$ and $e_2$ (e.g. [\texttt{linear}-\texttt{relu}]$\times 2$). DN, WN, and PQE have method-specific modules on top of the two encoders, which follow the corresponding published implementation. For MRN, both the metric part $d_\text{sym}$ and the residual asymmetric part $d_\text{asym}$ are a single hidden layer neural network with 176 neurons (e.g., \texttt{linear}-\texttt{relu}-\texttt{linear}). The actor network is the same as the monolithic critic network except that the output layer projects to the action space.
\myparagraph{Unified implementation on GPU} To our knowledge, there has not been a consistent codebase that compares all existing neural architectures for the GCRL problem. Moreover, past implementations with HER~\citep{andrychowicz2017hindsight} require heavy CPU computation (e.g., 19 threads per run per environment). To address both issues, we implement all existing implementations in a unified framework and train fully on GPUs, which greatly reduces the amount of training time when extensive CPU computation is not available. The implementation only requires $\sim$2000 mebibytes (MiB) per run per environment. Across all architectures, we use the same learning rate of $0.001$ and the Adam optimizer~\citep{kingma2014adam} for updating both the actor and the critic.
\myparagraph{Evaluation} For each architecture and each environment, we evaluate with 5 independent seeds $\{100, 200, 300, 400, 500\}$. The agent is trained on 1000 episodes of data each epoch. After each training epoch, we evaluate the agent by recording its average performance (success rate) over 100 independent rollouts with randomly sampled goals. We plot the average success rate over learning epochs, averaged over the 5 seeds, along with its standard deviation (shaded region).
\subsection{MRN Achieves Better Sample Efficiency}
\label{sec:exp-main}
The GCRL experiment results using different state-of-the-art neural architectures as the critic is presented in Fig.~\ref{fig:exp-main}. Note that within BVN, DN, WN, and PQE, no single method performs uniformly better than all other methods across all environments. By contrast, MRN (\textcolor{cyan}{cyan}) performs comparably to or better than all baseline methods in all environments.

\begin{figure*}[t!]
    \centering
    \includegraphics[width=\textwidth]{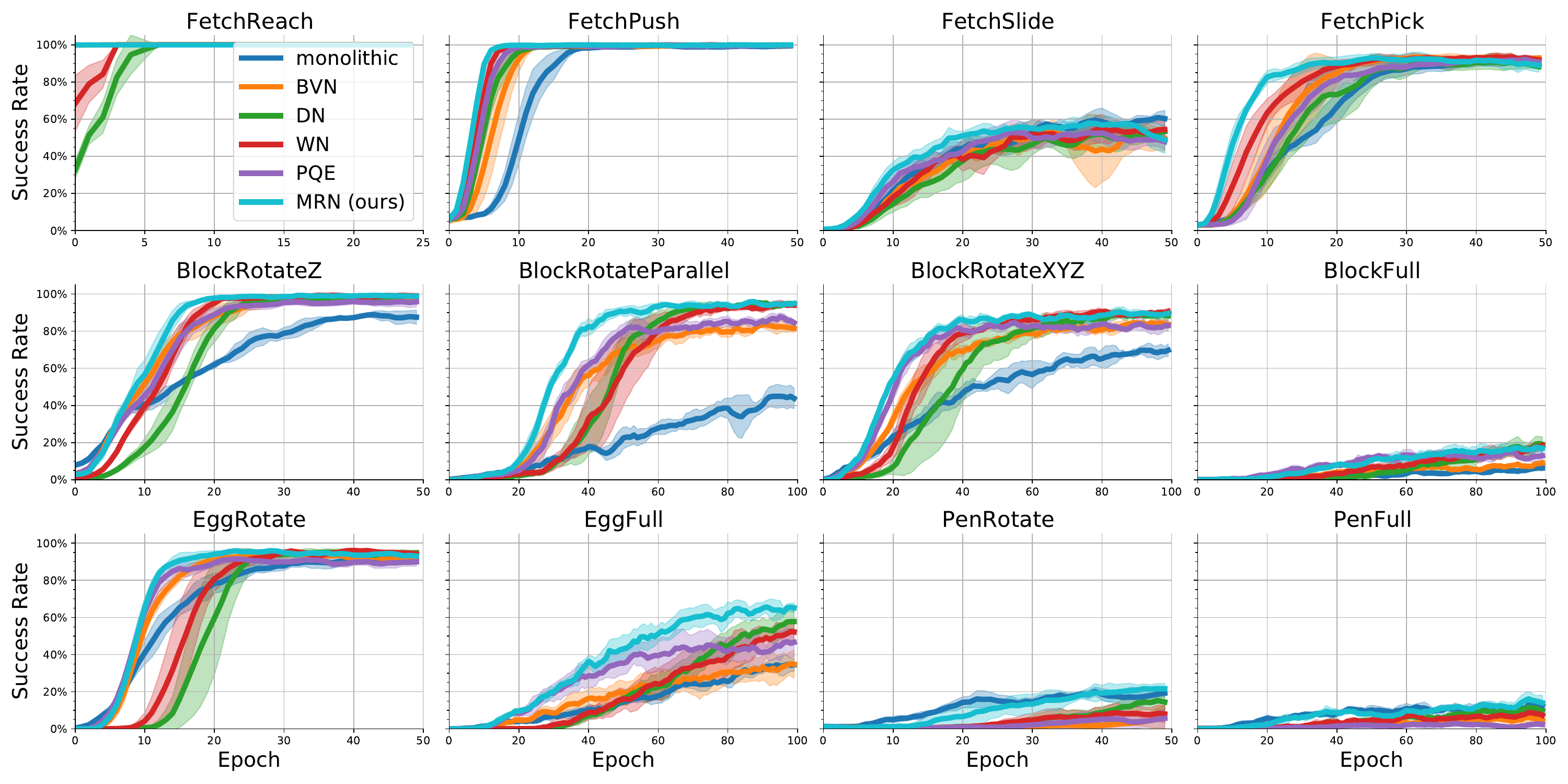}
    \vspace{-20pt}
    \caption{Success rate over training epochs for MRN (\textbf{ours}), the monolithic network, BVN~\citep{hong2022bilinear}, DN and WN~\citep{pitis2020inductive}, and PQE~\citep{wang2022learning}, on 12 GCRL environments from~\citep{plappert2018multi}.}
    \label{fig:exp-main}
    \vspace{-10pt}
\end{figure*}

\begin{figure*}[t!]
    \centering
    \includegraphics[width=\textwidth]{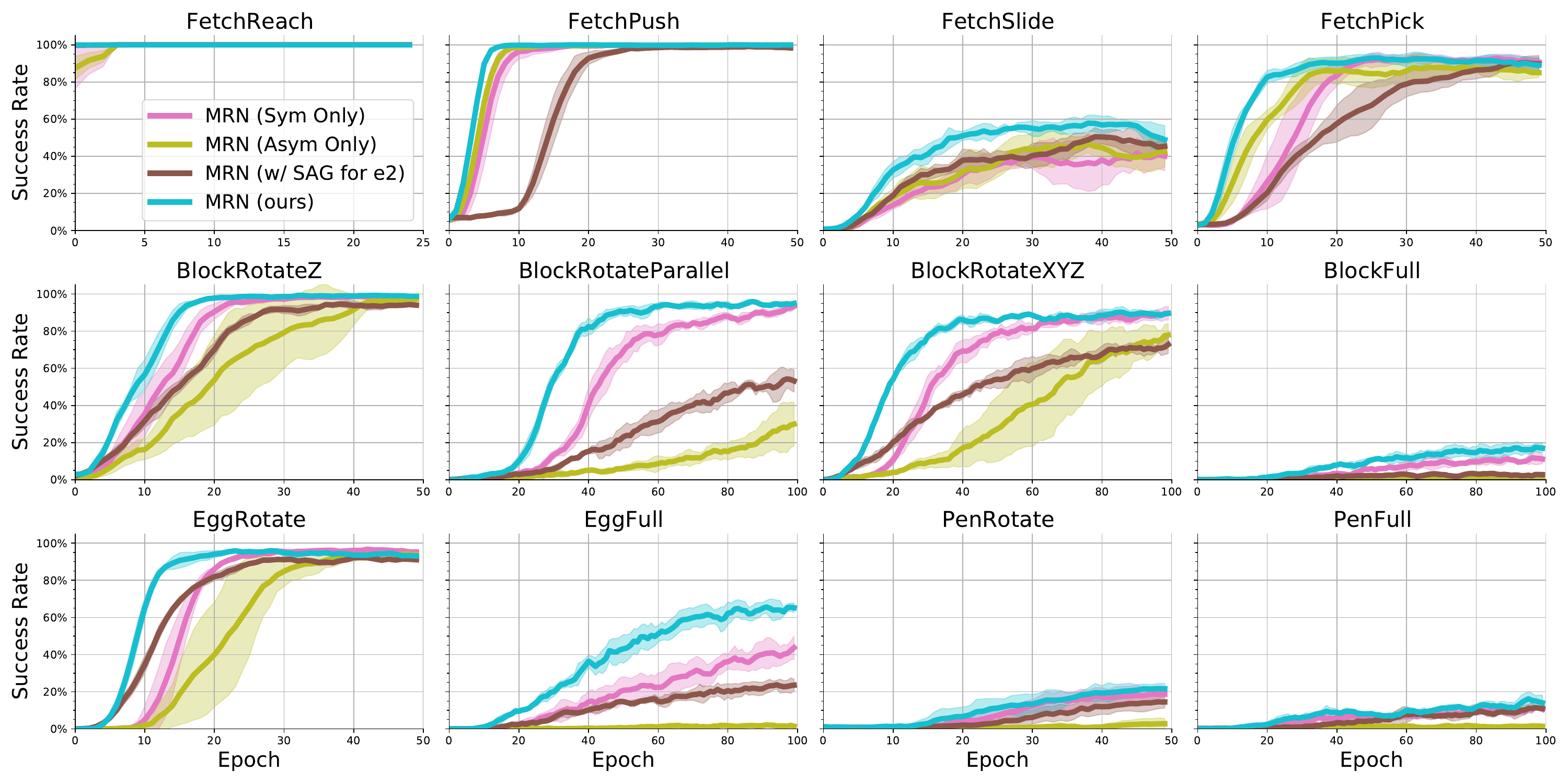}
    \vspace{-20pt}
    \caption{Ablation study on individual symmetric/asymmetric parts of MRN, and on feeding all $(s, a, g)$ to $e_2$.}
    \label{fig:exp-ablation}
    \vspace{-10pt}
\end{figure*}
\subsection{On Importance of Both $d_\text{sym}$ and $d_\text{asym}$}
\label{sec:exp-ablation}
To validate the necessity of decomposing MRN into $d_\text{sym}$ and $d_\text{asym}$, we conduct an ablation study that experiments with each part alone. To match the learning capacity, we enlarge the $d_\text{sym}$-only and $d_\text{asym}$-only networks to have 300 neurons per layer. The results are depicted in Fig.~\ref{fig:exp-ablation}. From the figure, we can see that the $d_\text{sym}$-only network converges faster than the $d_\text{asym}$-only network. However, neither alone achieves the best sample efficiency or final performance. This result further confirms the necessity of combining both parts. To exclude the case that MRN outperforms the ablated networks due to the learning rate, we double the learning rate for the ablated methods, and the result is in Appendix~\ref{sec:apx-exp-abl}. 
Recall from Sec.~\ref{sec:method-mrn} (and as illustrated in Fig.~\ref{fig:architecture}), that whereas in theory, $x_g$ is a function of $s, a, g$, in practice, we only provide $s,g$ as inputs for the sake of efficiency.  To validate this design, we check what happens when $h_{sa} = e_1(s,a)$ but $h_{sg} = e_2(s, a, g)$. The result is in Fig.~\ref{fig:exp-ablation} (w/ SAG for e2). We observe that feeding all $s, a, g$ as input to $e_2$ confuses the network and prevents it from distinguishing between the roles of $e_1$ and $e_2$.
\vspace{1pt}
\mysection{Conclusion}
This paper introduces the metric residual network for GCRL problems that has clear theoretical motivations and is straightforward to implement in practice. Comprehensive studies on standard GCRL benchmarks demonstrate that MRN outperforms all existing designs of neural architectures in terms of sample efficiency. As large models pretrained on a large set of environments become increasingly popular, we expect that investigating useful inductive biases for saving computation will become increasingly interesting and important. An interesting future research direction is to explore effective architectures for more general RL problems where the reward function can take an arbitrary form.

\clearpage
\section{Acknowledgement}

The research was conducted in both the statistical learning and AI group (SLAI) led by Qiang Liu and the Learning Agents
Research Group (LARG) led by Peter Stone in computer science at UT Austin. SLAI research is supported in part by CAREER-1846421, SenSE-2037267, EAGER-2041327, and Office of Navy Research, and NSF AI Institute for Foundations of Machine Learning (IFML).  LARG research is supported in part by NSF
(CPS-1739964, IIS-1724157, FAIN-2019844), ONR (N00014-18-2243), ARO
(W911NF-19-2-0333), DARPA, GM, Bosch, and UT Austin's
Good Systems grand challenge.  Peter Stone serves as the Executive
Director of Sony AI America and receives financial compensation for
this work.  The terms of this arrangement have been reviewed and
approved by the University of Texas at Austin in accordance with its
policy on objectivity in research. Special thanks to Xuchen Ma for the valuable discussion on goal-conditioned reinforcement learning.
\bibliography{aaai23.bib}

\newpage
\appendix
\onecolumn

\section{Theory}

\subsection{Proof of Thm.~\ref{thm-universal}}
\label{sec:apx-thm1}
\begin{thm} Consider the following goal-conditioned MDP $M_{gc} = (\gS, \gA, \gG, T, R, \gamma, \rho_0, \rho_g)$ as described in Sec.~\ref{sec:gcrl} except that $\gG\equiv \gS \times \gA$ . Then the optimal universal value function $Q^*$ defined in Eq.~\eqref{eq:specific-v} satisfies the triangle inequality: $~\forall~ x^1, x^2, x^3 \in \gX$,
    \begin{equation}
       Q^*(x^1, x^2) + Q^*(x^2, x^3) \leq Q^*(x^1, x^3).
    \end{equation}
\end{thm}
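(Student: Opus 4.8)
The plan is to lower-bound $Q^*(x^1,x^3)$ by the return of an explicit \emph{concatenated} policy that first plays near-optimally for the goal $x^2$ and, once it reaches $x^2$, switches to playing near-optimally for the goal $x^3$ — this makes rigorous the intuition sketched around Fig.~\ref{fig:intuition}. Fix $\epsilon>0$. By the definition of $Q^*$ in Eq.~\eqref{eq:specific-v} there are policies $\pi_1,\pi_2$ with $Q^{\pi_1}(x^1,x^2)\ge Q^*(x^1,x^2)-\epsilon$ and $Q^{\pi_2}(x^2,x^3)\ge Q^*(x^2,x^3)-\epsilon$. Run $\pi_1$ from $x_0=x^1$ and let $\tau=\inf\{t\ge 0: x_t=x^2\}$ (possibly $+\infty$). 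Let $\bar\pi$ be the policy that follows $\pi_1$ while $t<\tau$ and follows $\pi_2$ (restarted at $x_\tau$) from time $\tau$ on; $\bar\pi$ is an admissible policy, so $Q^*(x^1,x^3)\ge Q^{\bar\pi}(x^1,x^3)$.

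First I would split $Q^{\pi_1}(x^1,x^2)$ at $\tau$. Since $x_t\ne x^2$ for $t<\tau$, those rewards (for goal $x^2$) all equal $-1$, hence with the convention $\gamma^{\infty}=0$,
\begin{equation}
Q^{\pi_1}(x^1,x^2)=\mathbb{E}\!\left[-\tfrac{1-\gamma^{\tau}}{1-\gamma}\right]+\mathbb{E}\!\left[\mathbf{1}\{\tau<\infty\}\,\gamma^{\tau}G_1\right],\qquad G_1:=\textstyle\sum_{s\ge 0}\gamma^{s}r_{\tau+s,x^2}.
\end{equation}
Because all rewards are $\le 0$ we have $G_1\le 0$, so the second term is $\le 0$; calling the first term $A$ we get $A\ge Q^{\pi_1}(x^1,x^2)\ge Q^*(x^1,x^2)-\epsilon$.

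Next I would bound $Q^{\bar\pi}(x^1,x^3)$ the same way. For $t<\tau$ the reward for goal $x^3$ satisfies $r_{t,x^3}\ge -1$, so $\mathbb{E}[\sum_{t<\tau}\gamma^{t}r_{t,x^3}]\ge \mathbb{E}[-\tfrac{1-\gamma^{\tau}}{1-\gamma}]=A$; and on $\{\tau<\infty\}$ the strong Markov property at the stopping time $\tau$ gives $\mathbb{E}[\,\sum_{s\ge0}\gamma^{s}r_{\tau+s,x^3}\mid\mathcal F_\tau\,]=Q^{\pi_2}(x^2,x^3)$ since $x_\tau=x^2$. Hence, with $p:=\mathbb{E}[\mathbf{1}\{\tau<\infty\}\gamma^{\tau}]\in[0,1]$,
\begin{equation}
Q^{\bar\pi}(x^1,x^3)\ \ge\ A+p\,Q^{\pi_2}(x^2,x^3)\ \ge\ \big(Q^*(x^1,x^2)-\epsilon\big)+p\big(Q^*(x^2,x^3)-\epsilon\big).
\end{equation}
Since $Q^*(x^2,x^3)\le 0$ and $0\le p\le 1$ we have $p\,Q^*(x^2,x^3)\ge Q^*(x^2,x^3)$, so $Q^{\bar\pi}(x^1,x^3)\ge Q^*(x^1,x^2)+Q^*(x^2,x^3)-2\epsilon$. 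Combining with $Q^*(x^1,x^3)\ge Q^{\bar\pi}(x^1,x^3)$ and letting $\epsilon\downarrow 0$ gives the claim. (The degenerate cases where two of the $x^i$ coincide are subsumed: then $\tau=0$ or the corresponding $Q^*$ term is $\le0$.)

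The main obstacle is the bookkeeping forced by the \emph{non-terminating} formulation of Sec.~\ref{sec:gcrl}: unlike in a first-hitting-time problem, $\pi_1$ keeps collecting reward after touching $x^2$, so one must isolate the ``reaching'' contribution $A$ — and the inequality $A\ge Q^{\pi_1}(x^1,x^2)$ relies precisely on the continuation term being $\le 0$ — and one must correctly absorb the event $\{\tau=\infty\}$ via the discount factor $p\le 1$ together with $Q^*(x^2,x^3)\le 0$. The remaining care is invoking the strong Markov property to legitimately restart $\pi_2$ at $x_\tau=x^2$, and (in the stochastic-$T$ case) checking that the splittings of expectations over the random time $\tau$ are justified by the uniform bound $|\cdot|\le\tfrac{1}{1-\gamma}$ and dominated convergence.
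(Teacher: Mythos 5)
Your proof is correct and follows essentially the same route as the paper's: both construct the concatenated policy that pursues $x^2$ until first hitting it and then switches to pursuing $x^3$, split the return at the hitting time $\tau$, and use $r\le 0$, $r\ge -1$, and $\gamma^\tau\le 1$ together with $Q^*\le 0$ to absorb the error terms. The only cosmetic difference is that you argue directly with $\epsilon$-optimal policies while the paper argues by contradiction with optimal Markov policies; if anything, your handling of the $\{\tau=\infty\}$ event and of the time-$\tau$ boundary is slightly more careful than the paper's.
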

\begin{proof}
Assume the statement does not hold. As we fix the goals to be $x^2$ or $x^3$, the GCRL problem is transformed into two standard MDPs with fixed goals. Therefore, there exists an optimal Markov policy for each problem. Specifically, denote optimal Markov policies corresponding to $Q^*(x^1, x^2), Q^*(x^2, x^3)$ and $Q^*(x^1, x^3)$ as $\pi_1$, $\pi_2$ and $\pi_3$. Then consider the following policy $\pi_{1\rightarrow2}$ for $t > 0$:
\begin{equation}
    \pi_{1\rightarrow2}(a \mid s_t) = \begin{cases}
        \pi_1(a \mid s_t) & x^2 \notin x_{< t}, \\
        \pi_2(a \mid s_t) & \text{otherwise}.
    \end{cases}
\end{equation}
Here, $x_{< t} = \{x_0, x_1, \dots, x_{t-1}\}$ denotes all past state and action pairs before time $t$. Let $\tau$ be the random variable that indicates the first time $\pi_{1\rightarrow 2}$ reaches $x^2$. Then we define
$$
q^1_{1\rightarrow 2} = \mathbb{E}_{\substack{(x_t, r_t) \\\sim \pi_{1\rightarrow2}, T, R, \tau}} \bigg[ \sum_{t=0}^\tau \gamma^t r_{t,g} ~\big|~x_0 = x^1, g = x^2\bigg],
$$
and
$$
q^2_{2\rightarrow 3} = \mathbb{E}_{\substack{(x_t, r_t) \\\sim \pi_{1\rightarrow2}, T, R, \tau}} \bigg[ \sum_{t=\tau}^\infty \gamma^t r_{t,g} ~\big|~x_\tau=x^2, g = x^3\bigg].
$$
Then, clearly
\begin{equation}
\begin{split}
Q^{\pi_{1\rightarrow 2}}(x^1, x^3) &= q^1_{1\rightarrow 2} + q^2_{1\rightarrow 2} \\
&\geq Q^*(x^1, x^2) + Q^*(x^2, x^3) > Q^*(x^1, x^3).
\end{split}
\label{eq:thm1-chain}
\end{equation}
The first equality follows the definition of $Q^{\pi_{1\rightarrow 2}}$. To see why the first inequality holds, note that
\begin{align*}
&Q^*(x^1, x^2) - q^1_{1\rightarrow 2} \\
= &\mathbb{E}_{\substack{(x_t, r_t) \\\sim \pi_{1\rightarrow2}, T, R, \tau}} \bigg[ \sum_{t=\tau}^{\infty} \gamma^t r_{t,g} ~\big|~x_\tau =x^2, g = x^3\bigg] \\
\leq &0~~~~\text{(since $r_{t,g} \leq 0$)}.
\end{align*}
Similarly, we have
$$
     Q^*(x^2, x^3) - q^2_{1\rightarrow 2} \leq \mathbb{E}_\tau\big[\gamma^\tau Q^*(x^2, x^3)\big] - q^2_{1\rightarrow 2} \leq 0.
$$
Combining above we can tell that the first inequality holds. The last inequality in Eq.~\eqref{eq:thm1-chain} holds because of our assumption. Therefore, we have a policy $\pi_{1\rightarrow 2}$ (though possibly non-Markov) that achieves a higher expected discounted return with respect to reaching $x^3$ than $\pi^3$, which contradicts the assumption that $\pi^3$ is optimal.
\end{proof}

\subsection{Proof of Prop.~\ref{prop:quasipseudometric}}
\label{sec:apx-prop-quasipseudometric}
\begin{pro}
    Assume $x, y \in \gX$, define
\begin{equation}
    d(x, y) = d_\text{sym}(x, y) + \underbrace{\max_{i \in [K]} \big(h_i(x) - h_i(y)\big)_{+}}_{d_\text{asym}(x, y)},
    \label{eq:qm2}
\end{equation}
where $(\cdot)_+ = \max(\cdot, 0)$ is the Rectified Linear Unit (ReLU) function, $h_i: \gX \rightarrow \mathbb{R}$ and $(\gX, d_\text{sym})$ is a metric. Then $d$ ensures the three axioms of quasipseudometrics:
\begin{itemize}
    \item Non-negativity:
    $$
        \forall~ x, y \in \gX, ~~~~ d'(x,y) \geq 0.
    $$
    \item Identity:
    $$
        \forall~x \in \gX, ~~~~ d'(x, x) = 0.
    $$
    \item Triangle inequality:
    $$
        \forall~ x, y, z \in \gX, ~~~~d'(x,z) \leq d'(x, y) + d'(y,z).
    $$
\end{itemize}
\end{pro}
\begin{proof}
The non-negativity is obvious because $d$ is a metric and we use the ReLU function. The identity property can be easily checked. For the triangle inequality,
\begin{flalign*}
    &d'(x, y) + d'(y, z) = d(x, y) + d(y, z) \\
    & \quad + \max_i \big(h_i(x) - h_i(y)\big)_+ + \max_i \big(h_i(y) - h_i(z)\big)_+ \\
    &\geq d(x, z) + \max_i \big(h_i(x) - h_i(y) + h_i(y) - h_i(z)\big)_+ \\
    &= d'(x, z).
\end{flalign*}
\end{proof}

\subsection{Proof of $ Q^*(x, g) = \sup_{x': M(x') = g} Q^*(x, x')$ in Eq.~\eqref{eq:qxg_is_qxx}}
\label{sec:apx-obs}
Assume the $\sup$ is attainable. Fixing a goal $g$, the GCRL problem becomes the standard MDP and there exists a deterministic optimal policy $\pi^*$ for reaching $g$ from $x$. Since we assume the dynamics is also deterministic, then it means the optimal trajectory to reach $g$ from $x$ is also deterministic. Denote this trajectory as $\xi = (s_0, a_0, \dots)$ where $(s_0, a_0) = x$. Also denote
\begin{equation}
    x_g = \argmax\limits_{x': M(x') = g} Q^*(x, x')
\end{equation}
Then we argue that $Q^*(x, g) = Q^*(x, x_g)$. Assume otherwise, then either $Q^*(x, x_g) > Q^*(x, g)$ or $Q^*(x, x_g) < Q^*(x, g)$.
\begin{itemize}
    \item $Q^*(x, x_g) > Q^*(x, g)$: \\
    this is impossible because otherwise we can replace $\pi^*$ with the policy corresponding to $Q^*(x, x_g)$ to reach a higher return, contradicting the definition of $\pi^*$.
    \item $Q^*(x, x_g) < Q^*(x, g)$: \\
    Denote 
    $$
        \tau = \min_t (M(x_t) = g),~~~ x_t \in \xi.
    $$
    In other words, $x_\tau$ is the first $(s, a)$ pair along $\xi$ that reaches the goal. Then there are two cases:
    \begin{itemize}
        \item After $\pi^*$ reaches $x_\tau$, it will come back to $x_\tau$ again (hence repeatedly). In this case, clearly $$Q^*(x, x_g) \geq Q^*(x, x_\tau) \geq Q^*(x, g),$$ contradicting our assumption that $Q^*(x, g) > Q^*(x, x_g)$. The first inequality follows the definition of $x_g$. The second inequality follows the definition of $Q^*(x, x_\tau)$.
        \item After $\pi^*$ reaches $x_\tau$, it will \emph{never} come back to $x_\tau$ again. In this case, one can find the next $\tau' = \min_{t > \tau} (M(x_t) = g),~~x_t \in \xi$. Then similarly there are two cases, if $\pi^*$ repeatedly visits $x_{\tau'}$, then we follow the argument in the above case. Otherwise we can recursively find the next $\tau''$, so on and so forth. Eventually, either we end up with the last state $x_\zeta$ such that no $t > \zeta$ satisfies $M(x_t) = g$, from which then we can see that 
        $$
        Q^*(x, x_g) \geq Q^*(x, x_\zeta) \geq Q^*(x, g).
        $$
        The last inequality follows that ever since we visit $x_\zeta$, all future rewards will be $-1$. Or the other case is that there exists an infinite length sequence of such $\{x_\tau\}$. Then following this sequence, the original statement is still true though the $\sup$ is no longer attainable. But in this case, one can find a $x_\tau$ in the sequence such that $Q^*(x, x_\tau)$ is arbitrarily close to $Q^*(x, g)$.
    \end{itemize}
\end{itemize}

\subsection{Proof of Thm.~\ref{thm-universal}}
\label{sec:apx-thm2}
\begin{thm}[Universal approximation of $d$]
Given any continuous function $\nu:\gX \times \gX \rightarrow \mathbb{R}$, where $(\gX, \nu)$ is a quasipseudometric and $\gX$ is compact. Then $\forall \epsilon > 0$, with a sufficiently large $K$, there exists a quasipseudometric $d$ in the form of Eq.~\eqref{eq:qm} such that
$$
\forall x, y \in \gX, ~~|d(x, y) - \nu(x, y)| \leq \epsilon.
$$
\vspace{-12pt}
\end{thm}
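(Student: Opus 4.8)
The plan is to prove Theorem~\ref{thm-universal} by an explicit construction that separately handles the symmetric and asymmetric content of the target quasipseudometric $\nu$. The key structural fact to exploit is the decomposition $d = d_\text{sym} + d_\text{asym}$ in Eq.~\eqref{eq:qm}: $d_\text{sym}(x,y) = \|\phi(x)-\phi(y)\|_2^2$ is a (squared) Euclidean metric, and $d_\text{asym}(x,y) = \max_{i\in[K]}\big(h_i(x)-h_i(y)\big)_+$ is a max of ReLU-rectified differences of scalar functions. First I would observe that $d_\text{asym}$ has a ``potential-like'' structure: for fixed functions $h_i$, the term $\big(h_i(x)-h_i(y)\big)_+$ is large exactly when $x$ sits far ``above'' $y$ in the $i$-th coordinate. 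The crucial realization is that a max over many such terms can already represent a broad class of quasipseudometrics on its own; in fact, letting $\{y_k\}_{k=1}^K$ be an $\delta$-net of the compact space $\gX$ and setting $h_k(\cdot) = -\nu(\cdot, y_k)$, the triangle inequality $\nu(x,y_k) \le \nu(x,y)+\nu(y,y_k)$ and $\nu(y,y) = 0$ give $\big(h_k(x)-h_k(y)\big)_+ = \big(\nu(x,y_k)-\nu(y,y_k)\big)_+$, which for $y_k$ near $y$ (so $\nu(y,y_k)$ small, using continuity of $\nu$) approaches $\nu(x,y)$, and is never more than $\nu(x,y) + \nu(y,y_k)$ by the triangle inequality, and never more than $\nu(x,y)$ plus a small error from the other direction. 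So $\max_k \big(h_k(x)-h_k(y)\big)_+$ uniformly approximates $\nu(x,y)$ from below and above with error controlled by the net fineness and the modulus of continuity of $\nu$ on the compact set $\gX$.

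Concretely, the steps I would carry out are: (1) Fix $\epsilon > 0$. Since $\gX$ is compact and $\nu$ is continuous, $\nu$ is uniformly continuous, so pick $\delta > 0$ with $|\nu(a,b) - \nu(a',b')| < \epsilon/2$ whenever $d_\gX(a,a') < \delta$ and $d_\gX(b,b') < \delta$ (using any fixed reference metric on $\gX$ — e.g. $\nu$ itself is continuous in this topology, or take the ambient metric). (2) Choose a finite $\delta$-net $\{y_1,\dots,y_K\} \subseteq \gX$. (3) Define $h_k(x) := -\nu(x, y_k)$ for $k \in [K]$, and set $d_\text{sym} \equiv 0$ (which is trivially a pseudometric; if the statement insists on $d_\text{sym}$ being a genuine metric one can instead take $d_\text{sym}(x,y) = \kappa\|\phi(x)-\phi(y)\|^2$ for a tiny $\kappa$ and an embedding $\phi$, contributing at most $\epsilon/2$ error, or simply note a pseudometric suffices for the proposition's hypotheses). (4) Verify the upper bound: for any $x,y$, and any $k$, $\big(h_k(x)-h_k(y)\big)_+ = \big(\nu(x,y_k) - \nu(y,y_k)\big)_+ \le \big(\nu(x,y) + \nu(y,y_k) - \nu(y,y_k)\big)_+ = \nu(x,y)$ when $\nu(x,y_k) \ge \nu(y,y_k)$, and is $0$ otherwise; either way $\le \nu(x,y)$, so $d_\text{asym}(x,y) \le \nu(x,y)$. (5) Verify the lower bound: pick $k^*$ with $d_\gX(y, y_{k^*}) < \delta$, so $\nu(y, y_{k^*}) < \epsilon/2$ (since $\nu(y,y)=0$ and uniform continuity), hence $\big(h_{k^*}(x)-h_{k^*}(y)\big)_+ \ge \nu(x,y_{k^*}) - \nu(y,y_{k^*}) \ge (\nu(x,y) - \epsilon/2) - \epsilon/2 = \nu(x,y) - \epsilon$, using $\nu(x,y) \le \nu(x,y_{k^*}) + \nu(y_{k^*},y)$ — wait, I need $|\nu(x,y_{k^*}) - \nu(x,y)| < \epsilon/2$, which follows directly from uniform continuity since $d_\gX(y_{k^*},y) < \delta$. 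So $d_\text{asym}(x,y) \ge \nu(x,y) - \epsilon$. (6) Conclude $|d(x,y) - \nu(x,y)| \le \epsilon$ uniformly (absorbing the $d_\text{sym}$ contribution if nonzero by halving tolerances throughout).

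The main obstacle I anticipate is reconciling the construction with the \emph{exact} functional form demanded by Eq.~\eqref{eq:qm}, where $d_\text{sym}$ is specifically $\|\phi(x)-\phi(y)\|_2^2$ (a genuine metric-inducing term, not allowed to be identically zero if one reads the hypothesis strictly) and the $h_i$ must be realizable — ultimately one wants them to be neural networks, invoking a standard universal-approximation theorem for continuous functions on compact sets to replace the $h_k = -\nu(\cdot,y_k)$ by close MLP approximants, with the extra approximation error folded into the budget. Handling $d_\text{sym}$ rigorously either requires arguing that a vanishingly small genuine-metric perturbation is harmless, or — cleaner — noting that $\|\phi(x)-\phi(y)\|_2^2$ can itself be made uniformly small by taking $\phi$ with small norm, since $\gX$ is compact and $\phi$ continuous. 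A secondary subtlety is that $\nu$ need not be symmetric, so one must be careful in the uniform-continuity step to control $\nu$ in \emph{both} arguments and not accidentally assume symmetry; but compactness plus joint continuity of $\nu$ on $\gX\times\gX$ handles this. I expect the whole argument to be short once the net-plus-potential idea is in place — the insight that $h_k = -\nu(\cdot,y_k)$ turns the ReLU-max into a tight sandwich of $\nu$ via the triangle inequality is the entire content.
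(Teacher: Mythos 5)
Your proof is essentially the paper's own argument: both rest on the identity $\nu(x,y)=\sup_{z}\big(\nu(x,z)-\nu(y,z)\big)_+$ (attained at $z=y$), discretize $z$ over a finite $\delta$-net of the compact domain via uniform continuity of $\nu$, and take $d_\text{sym}\equiv 0$. The only blemish is a sign slip --- with $h_k(\cdot)=-\nu(\cdot,y_k)$ you would get $\big(\nu(y,y_k)-\nu(x,y_k)\big)_+$, so you want $h_k(\cdot)=\nu(\cdot,y_k)$ --- but all of your subsequent bounds already use the correct expression, so nothing else changes.
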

\begin{proof}
First of all, from Prop.~\ref{prop:quasipseudometric}, any $d$ in the form of Eq.~\eqref{eq:qm} is a quasipseudometric. Next we show that any quasipseudometric $\nu$ can be approximated by a $d$ in the form of Eq.~\eqref{eq:qm}. Since $\nu$ is a quasipseudometric, we have
 \begin{equation}
     \forall~x, y, z \in \gX,~~~\nu(x, y) \geq \nu(x, z) - \nu(y, z)
     ~~~\Longrightarrow~~~ \nu(x, y) = \sup_{z' \in \gX} \bigg(\nu(x, z') - \nu(y, z')\bigg)_+.
    \label{eq:thm-ua-1}
 \end{equation}
 As $\nu(x_1, x_2)$ is continuous in both $x_1$ and $x_2$, for a fixed pair of $(x, y)$, define
 \begin{equation}
     f(z) = \bigg(\nu(x, z) - \nu(y, z)\bigg)_+.
 \end{equation}
 Then $f$ is clearly continuous in $z$ (continuous functions are closed under subtraction and max operation). By definition of continuity,
 \begin{equation}
 \forall ~\epsilon > 0,~~\exists ~\delta > 0,~~~~~d'(z_1, z_2) \leq \delta \Longrightarrow |f(z_1) - f(z_2)| \leq \epsilon,
 \label{eq:lem-2}
 \end{equation}
 where $d'$ is a metric on $\gX$. Since $\gX$ is compact, for a sufficient large $K$, we can cover the domain $\gX$ with $K$ $\delta$-radius balls centered at $\{\xi_i\}_{i=1}^K$, i.e.
$$
\gX \subseteq \bigcup_{i=1}^K B_i,~~\text{where}~~ B_i = \{x \in \gX \mid d'(\xi_i, x) \leq \delta\}.
$$
Now, consider the function $d$, where for any $x, y \in \gX$,
\begin{equation}
    d(x, y) = \max_i \bigg( \nu(x, \xi_i) - \nu(y, \xi_i) \bigg)_+.
\end{equation}
Denote $z^*(x,y) = \argmax_{z'} (v(x, z') - v(y, z'))_+$, then we have $\forall x, y \in \gX$,
\begin{equation}
    \begin{split}
    |d(x, y) - \nu(x, y)| 
    &= \bigg|\max_i \bigg( \nu(x, \xi_i) - \nu(y, \xi_i) \bigg)_+ - \bigg(\nu(x, z^*(x, y)) - \nu(y, z^*(x, y))\bigg)_+ \bigg| \\
    &= \bigg|\max_i \bigg(f(\xi_i) - f(z^*(x, y)) \bigg)\bigg|\\
    &\leq \max_i \bigg|f(\xi_i) - f(z^*(x, y)) \bigg| \\
    &\leq \epsilon \ant{~$\exists~ \xi_i, d'(\xi_i, z^*(x,y)) \leq \delta$, then apply Eq.~\eqref{eq:lem-2}}
    \end{split}
\end{equation}
Clearly the $d$ defined above is in the form of Eq.~\eqref{eq:qm} (we can set $d_\text{sym}(x,y) = 0, \forall x, y \in \gX$).
\end{proof}

\section{Algorithm}
The forward pass of MRN in PyTorch-like pseudocode is provided in Alg.~\ref{alg:MRN}. Here, $e_1$, $e_2$, $d_\text{sym}$ and $d_\text{aym}$ are standard multi-layer perceptrons (MLPs).
\begin{algorithm}[h!]
\begin{flushleft}
\PyInput{The forward pass in PyTorch code} \\ 
\PyCode{def forward(self, state, action, goal):}\\
\qquad\PyComment{state~~(batch, dim\_state)} \\
\qquad\PyComment{action~(batch, dim\_action)} \\
\qquad\PyComment{goal~~~(batch, dim\_goal)} \\
\PyCode{}\\
\qquad\PyComment{1. Projection} \\
\PyCode{~~~~sa = torch.cat([state, action], -1)}\\
\PyCode{~~~~sg = torch.cat([state, goal], -1)}\\
\PyCode{~~~~x ~= self.e\_1(sa)}\\
\PyCode{~~~~y ~= self.e\_2(sg)}\\
\PyCode{}\\
\qquad\PyComment{2. Enforcing triangle inequality} \\
\PyCode{~~~~d\_sym ~= self.sym(x) - self.sym(y)}\\
\PyCode{~~~~d\_sym ~= d\_sym.pow(2).mean(-1).sqrt()}\\
\PyCode{~~~~d\_asym = self.asym(x) - self.asym(y)}\\
\PyCode{~~~~d\_asym = F.relu(d\_asym).max(-1)[0]}\\
\PyCode{~~~~Q\_sag ~= -(d\_sym + d\_asym).view(-1,1)}\\
\PyCode{~~~~return Q\_sag}\\
\end{flushleft}
\caption{Computation of {\PyCode{Q($s, a, g$)}} with MRN}
\label{alg:MRN}
\end{algorithm}

\section{Experiment} 
\subsection{Ablation study with doubled learning rate}
\label{sec:apx-exp-abl}
We conduct the same ablation as in Sec.~\ref{sec:exp-main} except that we double the learning rate to $0.002$. Results are shown in Fig.~\ref{fig:abl-2}. From the figure, we can see that the claim in the main experiment section holds true.
\begin{figure*}[t!]
    \centering
    \includegraphics[width=\textwidth]{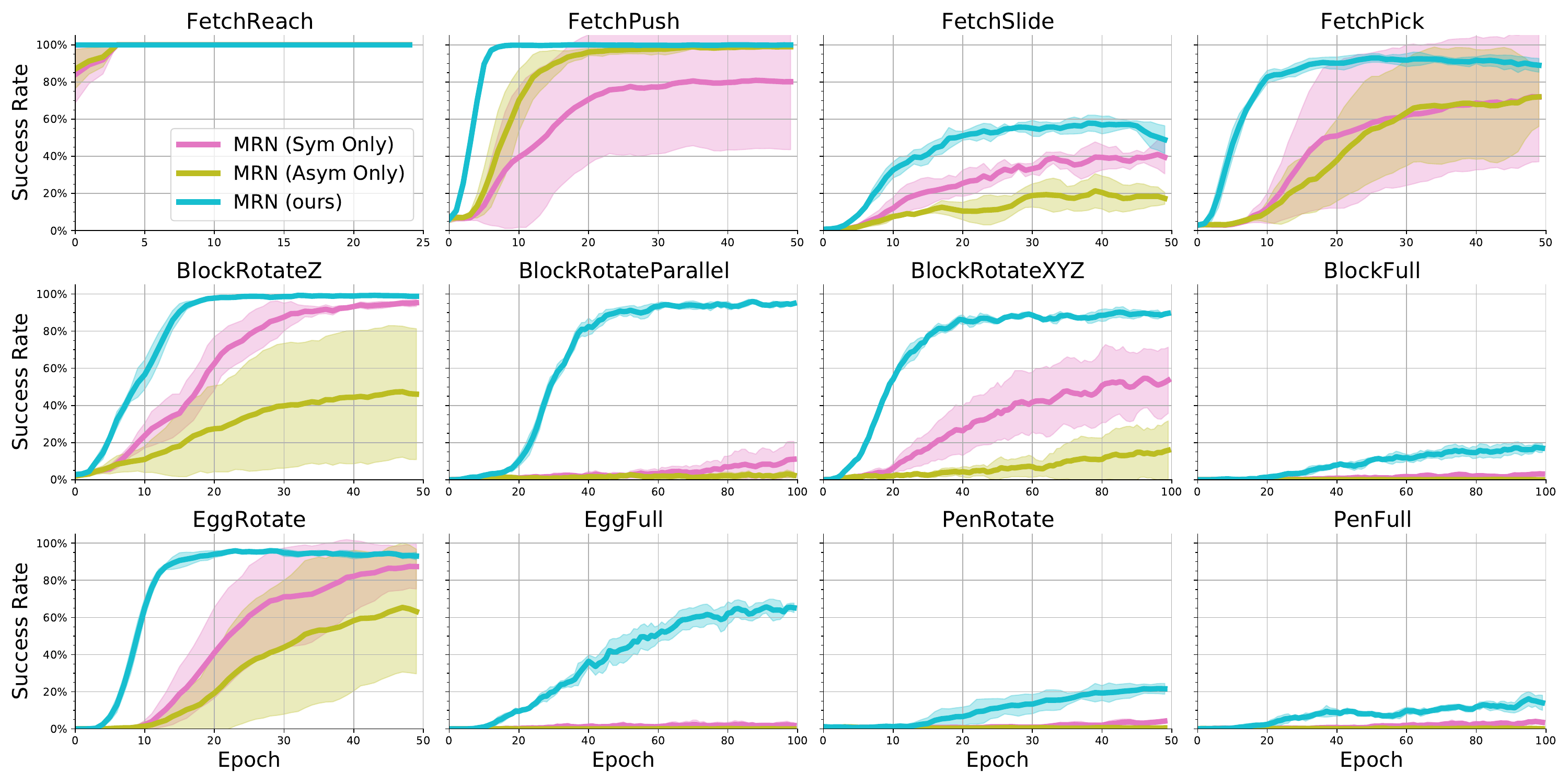}
    \caption{Ablation study on individual symmetric/asymmetric parts of MRN with the doubled learning rate (0.002).}
    \label{fig:abl-2}
\end{figure*}

\end{document}